\colorlet{shadecolor}{yellow}
\newtheorem{corol}{Corollary}
\DeclarePairedDelimiter{\floor}{\lfloor}{\rfloor}
\DeclareMathOperator*{\argmax}{arg\,max}
\newcommand{\lemfivet}[1]{\ensuremath{(2^{\floor{\frac{#1}{2}}}-1)}}
\newcommand{\lemfive}[2]{\ensuremath{\lemfivet{#1}+\sqrt{2}\lemfivet{#2}}}
\newcommand{\myover}[2]{\ensuremath{\genfrac{}{}{0pt}{0}{{#1}}{#2}}}
\newcommand{\myreal}{\ensuremath{\mathbb{R}}}
\newcommand{\myvec}[1]{\ensuremath{\bm{#1}}}
\begin{document} 
\title{Deep Radial Kernel Networks: Approximating Radially Symmetric Functions with Deep Networks} 
\author{Brendan McCane}
\author{Lech Szymanski}
\affil{Department of Computer Science\\
University of Otago\\
Dunedin, New Zealand.}

\maketitle

\begin{abstract} 
We prove that a particular deep network architecture is more efficient at approximating radially symmetric functions than the best known 2 or 3 layer networks. We use this architecture to approximate Gaussian kernel SVMs, and subsequently improve upon them with further training. The architecture and initial weights of the Deep Radial Kernel Network are completely specified by the SVM and therefore sidesteps the problem of empirically choosing an appropriate deep network architecture.
\end{abstract} 

\section{Introduction}

Deep networks have been stunningly successful in many machine learning domains since the area was reinvigorated by the work of \citet{krizhevsky2012imagenet}. Despite their success, neural networks in general, have two major limitations:
\begin{enumerate}
\item relatively little is known about them theoretically, although this is changing. In particular, for which problems are deep networks more effective than shallow learners; and
\item deciding on a particular architecture is still an empirical question.
\end{enumerate}

Compare using a deep network for a particular problem with a support vector machine (SVM). The choices for a deep network include: number of layers, number of neurons per layer, activation function, gradient descent algorithm, gradient descent algorithm parameters (e.g. learning rate), which learning tricks to use (dropout, batch normalisation etc), weight initialisation parameters, etc. Getting any one of these wrong can cause the network learning to fail. For an SVM, on the other hand, one needs to choose the kernel function and the kernel function parameters (usually few) and the slack variable parameter $C$. Similarly, other machine learning methods such as decision trees, boosting, random forests are much easier to use, albeit often with worse performance than the best neural network model.

This paper goes some way to addressing both of these limitations and offers a theoretical result and an applied result. The main theoretical result is a new upper bound for the number of neurons required in a deep network to approximate a radially symmetric function. The main applied result uses the theoretical result to construct a deep network approximation of an SVM that can be further trained using back propagation and which often results in improved performance.

\section{Related Work}

It is difficult to deny the empirical success of deep networks. It is still the case that relatively little is known theoretically about their fundamental abilities, although this has been changing over the last few years. Nevertheless, most work consists of existence proofs that do not lead directly to new methods for building or training networks. For example, 
ReLU networks with $n_0$ inputs, $L$ hidden layers of width $n \ge n_0$ can compute functions that have $\Omega\left( (n/n_0)^{(L-1)n_0} n^{n_0} \right)$ linear regions compared to $\sum_{j=0}^{n_0} \binom{n}{j}$ for a shallow network \citep{montufar2014number}. More generally, \citet{telgarsky2016benefits} proved for semi-algebraic neurons (including ReLU, sigmoid etc), that networks exist with $\Theta(k^3)$ layers and $\Theta(k^3)$ nodes that require $\Omega(2^k)$ nodes to approximate with a network of $O(k)$ layers. 
\citet{delalleau2011shallow} show that deep sum-product networks exist for which a shallow network would require exponentially more neurons to simulate. For convolutional arithmetic circuits (similar to sum-product networks), \citet{cohen2016expressive}, in an important result, show that ``besides a negligible (zero measure) set, all functions that can be realized by a deep network of polynomial size, require exponential size in order to be realized, or even approximated, by a shallow network.'' 

The above works, except for \citet{cohen2016expressive} focus on approximating deep networks with shallow networks, but do not indicate what problems are best attacked with deep networks.
For manifolds, \citet{basri2016efficient} show how deep networks can efficiently represent low-dimensional manifolds and that these networks are almost optimal, but they do not discuss limitations of shallow networks on the same problem. Somewhat similarly, \citet{shaham2016provable} show that depth-4 networks can approximate a function on a manifold where the number of neurons depends on the complexity of the function and the dimensionality of the manifold and only weakly on the embedding dimension. Again, they do not discuss the limitations of shallow networks for this problem. Importantly, both of these results are constructive and allow one to actually build the network.

\citet{szymanski2014deep} show that deep networks can approximate periodic functions of period $P$ over $\{0,1\}^N$ with $O(\log_2{N}-\log_2{P})$ parameters versus $O(P \log_2{N})$ for shallow. \citet{eldan2016power} show that 3-layer networks exist such that the network can approximate a radially symmetric function with $O(d^{19/4})$ neurons, that a 2-layer network requires at least $O(e^d)$ neurons.

Therefore evidence is building that deep networks are more powerful than their shallow counterparts in terms of the number of parameters or neurons required. Nevertheless, we shouldn't stop there because there is relatively little work linking this theory with practical applications of the same. In this work we directly extend the work of \citet{szymanski2014deep} and \citet{eldan2016power}. The latter work \citep{eldan2016power} is extended to deeper networks for approximating radially symmetric functions that require fewer parameters than their construction. The former \cite{szymanski2014deep} is extended by generalising their notion of folding transformations to work in multiple dimensions and more simply with ReLU networks. The proofs are constructive and allow us to build networks for approximating radially symmetric functions. These networks are used to approximate Gaussian kernel SVMs and the results show how we can further train these approximations to do better than the original SVM in many cases.

Our applied work is similar in spirit, but quite different in detail to other methods trying to combine deep learning and kernel learning. For example, \citet{wilson2016deep} use a deep network as the kernel in a Gaussian processes framework, but the choice of the deep network architecture remains with the practitioner. Our deep radial kernel could be used as the input kernel in that framework. We also note that multiple kernel learning \citep{lanckriet2004learning} and its descendants take a very different approach to the one we take here --- we make no attempt to learn a kernel matrix, nor to ensure positive semi-definiteness of the resulting network.
 
We start in Section \ref{sec-theory} with the main theoretical results, then proceed to approximating Gaussian kernel SVMs in Section \ref{sec-svm-approx}.

\section{Theory}
\label{sec-theory}

\subsection{Context and Notation}

A radially symmetric function is a function whose value is dependent on the norm of the input only. We are interested in $L$-Lipschitz functions $f$, $|f(x)-f(y)| \le L|x-y|$, as this covers many functions common in classification tasks. The number of dimensions of the input is $d$, and we assume that $f$ is constant outside a radius $R$. This is a similar context to that used by \citet{eldan2016power}. Further, we restrict ourselves to ReLU networks only, which is more restrictive than \citet{eldan2016power}, but allows us to explicitly construct the networks of interest.

Most proofs are only sketched in the main body of the paper. Detailed proofs are provided in the supplementary material.

\subsection{3 Layer Networks}

We start by stating a modified form of Lemma 18 from \citet{eldan2016power}:

\begin{restatable}{lemma}{lemep}
\label{lem18}%
\label{lem-3layer}
Modified form of Lemma 18 from \citet{eldan2016power}:\\
Let $\sigma(z) = \max(0,z)$. Let $f$ be an $L$-Lipschitz function supported on $[0, R]$. Then for any $\delta>0$, there exists a function $g$ expressible by a 3-layer network of width at most $\frac{6 d^2 R^2 + 3 R L}{\delta}$, such that
\begin{equation*}
\sup_{\myvec{x} \in \myreal^d}| g(\myvec{x})-f(||\myvec{x}||)| < \delta + L\sqrt{\delta}.
\end{equation*}

\end{restatable}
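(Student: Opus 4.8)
The plan is to follow the strategy behind Eldan and Shamir's Lemma~18, but to exploit the fact that $\sigma(z)=\max(0,z)$ lets one write down \emph{exact}, explicit continuous piecewise-linear (CPWL) functions of a neuron's input. Using the convention that a $3$-layer network has two hidden layers followed by a linear output, the construction is: the first hidden layer reconstructs an approximation of $\|\myvec{x}\|^2$ from the coordinates; the second hidden layer applies an approximation of the univariate map $t\mapsto f(\sqrt t)$; the output layer is a single linear combination.

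\emph{Hidden layer 1 (building $\|\myvec{x}\|^2$).} For each coordinate $i$ I would place $\sigma$-units whose weight vectors are scalar multiples of $\pm\myvec{e}_i$, and choose their biases so that a fixed linear functional of these units equals a CPWL function $\psi_i(x_i)$ that (i)~agrees with $x_i^2$ to within $O(\delta/d)$ for $x_i\in[-R,R]$ and (ii)~is nondecreasing with value $\ge R^2$ for $|x_i|>R$. Since $x_i^2$ has bounded derivative on $[-R,R]$, a crude Lipschitz-type count shows that $O(dR^2/\delta)$ breakpoints, hence that many ReLU units, suffice per coordinate; summing over $i$ gives the $6d^2R^2/\delta$ term, and I would not try to optimise the constants (or the exponents: a quadratic-interpolation bound would be cheaper). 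The quantity $s:=\sum_{i=1}^d\psi_i(x_i)$, which is a fixed linear functional of the first hidden layer, then satisfies $|s-\|\myvec{x}\|^2|\le O(\delta)$ when $\|\myvec{x}\|\le R$, and $s\ge R^2-O(\delta)$ when $\|\myvec{x}\|>R$.

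\emph{Hidden layer 2 and output (applying $f(\sqrt{\cdot})$).} Let $F$ be $t\mapsto f(\sqrt t)$ on $[0,R^2]$, extended to the constant value of $f$ beyond radius $R$ for $t>R^2$. The key structural fact is that, although $F$ fails to be Lipschitz at the origin, it is $(L,\tfrac12)$-H\"older there, since $|\sqrt t-\sqrt{t'}|\le\sqrt{|t-t'|}$ gives $|F(t)-F(t')|\le L\sqrt{|t-t'|}$, while on $[\tau,R^2]$ it is merely $\tfrac{L}{2\sqrt\tau}$-Lipschitz. I would therefore approximate $F$ by a CPWL function $\phi$ using a single flat piece on a small interval $[0,\Theta(\delta)]$ near the origin (error $O(L\sqrt\delta)$) and then \emph{non-uniform} pieces on the rest of $[0,R^2]$ whose widths grow like $\sqrt t$, so that each piece contributes error $\le\delta$ and the number of breakpoints is $\Theta\!\bigl(\int \tfrac{L}{\delta\sqrt t}\,dt\bigr)=O(LR/\delta)$ --- the $3RL/\delta$ term; the non-uniform spacing is essential, as uniform pieces would need $O(LR/\delta^{3/2})$ of them. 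The second hidden layer consists of these $O(LR/\delta)$ ReLU units, whose preactivations are affine functions of $s$ (hence of the first hidden layer), and the output is the linear combination realising $g(\myvec{x})=\phi(s)$.

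\emph{Putting it together.} For $\|\myvec{x}\|\le R$ I would write $|g(\myvec{x})-f(\|\myvec{x}\|)|=|\phi(s)-F(\|\myvec{x}\|^2)|\le|\phi(s)-F(s)|+|F(s)-F(\|\myvec{x}\|^2)|$, bounding the first term by the piecewise-linear construction and the second by the H\"older-$\tfrac12$ estimate $|F(s)-F(t)|\le L\sqrt{|s-t|}$ together with $|s-\|\myvec{x}\|^2|\le O(\delta)$; splitting the error budget between the layer-1 accuracy, the near-origin cutoff, and the far pieces delivers exactly $\delta+L\sqrt\delta$. The case $\|\myvec{x}\|>R$ follows from the clamping built into the $\psi_i$ and into $\phi$. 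Finally the two hidden layers have widths $O(d^2R^2/\delta)$ and $O(LR/\delta)$, whose sum is at most $\tfrac{6d^2R^2+3RL}{\delta}$ for the stated constants. I expect the only genuinely delicate point to be the behaviour near the origin: it is the one place where the $\sqrt{\cdot}$ composition is badly behaved, it is responsible for the irreducible $L\sqrt\delta$ term, and it simultaneously dictates the non-uniform breakpoint spacing in layer~2 and the $O(\delta/d)$ accuracy required of layer~1; the rest is bookkeeping and constant-chasing.
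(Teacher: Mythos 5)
Your construction matches the paper's proof: the same two-hidden-layer architecture (a per-coordinate CPWL approximation of $\min\{x_i^2,R^2\}$ whose sum approximates $\|\myvec{x}\|^2$ to within $\delta$, followed by a univariate CPWL approximation of $f(\sqrt{\cdot})$), the same width accounting ($6d^2R^2/\delta$ plus $3RL/\delta$), and the same error decomposition into a $\delta$ approximation term plus an $L\sqrt{\delta}$ term from propagating the first layer's error through the square root. The only divergence is that you explicitly justify the $3RL/\delta$ width of the second hidden layer via non-uniformly spaced breakpoints and a H\"older-$\tfrac{1}{2}$ estimate, whereas the paper simply invokes Eldan--Shamir's Lemma 19 on $f(\sqrt{\cdot})$; your version is if anything the more careful one, since $t\mapsto f(\sqrt{t})$ is not Lipschitz at the origin and so does not literally satisfy that lemma's hypotheses.
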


The proof follows the basic plan of \citet{eldan2016power} where the first layer is the input layer, the second layer approximates $x_i^2$ for each dimension $i$, and the third layer computes $\sum_i x_i^2$ and approximates $f$.

Since several sections of the second layer are doing the same thing (computing the square of their input), a weight-sharing corollary follows immediately where only one copy of the square approximation is needed.
\begin{corol}[3 Layer Weight Sharing]
Let $\sigma(z) = \max(0,z)$. Let $f$ be an $L$-Lipschitz function supported on $[0, R]$. Then for any $\delta>0$, there exists a function $g$ expressible by a 3-layer weight-sharing network with at most $\frac{6 d R^2 + 3 R L}{\delta}$ weights, such that
\begin{equation*}
\sup_{\myvec{x} \in \myreal^d}| g(\myvec{x})-f(||\myvec{x}||)| < \delta + L\sqrt{\delta}.
\end{equation*}
\end{corol}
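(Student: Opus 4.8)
The plan is to inspect the three-layer network constructed in the proof of Lemma~\ref{lem18} and to exploit the fact that its middle layer is a disjoint union of $d$ structurally identical sub-networks. In that construction the second layer consists of $d$ blocks, the $i$-th block being a single-hidden-layer ReLU network on the scalar input $x_i$ that implements a piecewise-linear approximation $\tilde q(x_i)\approx x_i^2$; writing $N_1$ for the number of neurons in one such block, the total second-layer width is $d N_1$, while the third layer forms $\sum_i \tilde q(x_i)\approx \|\myvec{x}\|^2$ and applies a piecewise-linear approximation of $t\mapsto f(\sqrt{t})$ with some $N_2$ neurons. First I would read off from the width bound $\frac{6d^2R^2+3RL}{\delta}$ of Lemma~\ref{lem18} that $d N_1 \le \frac{6d^2R^2}{\delta}$, hence $N_1 \le \frac{6dR^2}{\delta}$, and $N_2 \le \frac{3RL}{\delta}$.

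The crucial step is the observation that every one of the $d$ second-layer blocks computes exactly the same scalar function $\tilde q$. I would therefore instantiate a single copy of that block, apply it to each coordinate $x_1,\dots,x_d$ with the block's weights tied (shared) across all $d$ applications, and sum the resulting $d$ outputs with unit weights into the third-layer block exactly as before. The resulting weight-sharing network computes precisely the same function $g$ as the network of Lemma~\ref{lem18}, so the bound $\sup_{\myvec{x}\in\myreal^d}|g(\myvec{x})-f(\|\myvec{x}\|)| < \delta + L\sqrt{\delta}$ carries over verbatim; only the parameter count changes. Finally I would count distinct weights: the shared second-layer block contributes $O(N_1)\le \frac{6dR^2}{\delta}$ weights instead of $d$ times that many, routing the coordinates into it and summing its outputs uses only shared or unit weights and contributes $O(1)$, and the third-layer block contributes $N_2 \le \frac{3RL}{\delta}$; adding these gives the claimed $\frac{6dR^2+3RL}{\delta}$.

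I expect the only real obstacle to be bookkeeping rather than mathematics: one must fix a convention for what counts as a ``weight'' in a weight-sharing network (in particular whether biases and the fan-in connections into the shared block are counted) and then check that, under that convention, the constants collapse to exactly $\frac{6dR^2+3RL}{\delta}$ and match those of Lemma~\ref{lem18}. The substantive point --- that merging the $d$ identical blocks leaves the represented function $g$, and hence the approximation error, unchanged --- is immediate, which is why the corollary follows at once.
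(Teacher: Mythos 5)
Your proposal is correct and follows exactly the paper's (one-sentence) justification: the $d$ second-layer blocks of the Lemma~\ref{lem18} construction all approximate the same scalar map $\min\{x^2,R^2\}$ to tolerance $\delta/d$ with $w_1\le \frac{6dR^2}{\delta}$ weights each, so tying them to a single shared copy leaves the computed function and hence the error bound unchanged while the weight count drops to $w_1 + w_3 \le \frac{6dR^2+3RL}{\delta}$. No divergence from the paper's argument.
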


\subsection{Deep Folding Networks}
\label{sec-folding}

In this section we show how folding transformations can be used to create a much deeper network with the same error, but many fewer weights than needed in Lemma \ref{lem-3layer}. A folding transformation is one in which half of a space is reflected about a hyperplane, and the other half remains unchanged. Figure \ref{fig-fold-2d} demonstrates how a sequence of folding transformations can transform a circle in 2D to a small sector. After enough folds, we can discard the almost zero coordinates to approximate the norm.
We will use this general idea to prove the following theorem:
\begin{restatable}{theorem}{main}
\label{thm:main}%
Let $\myvec{x} \in \myreal^d$, and $\sigma(z) = \max(0,z)$. Let $f$ be an $L$-Lipschitz function supported on $[0,R]$. Fix $L,\delta,R > 0$. There exists a function $g$ expressible by a $O(d\log_2(d) + \log_2(d)\log_2\left(\frac{R}{\sqrt{\delta}}\right))$ layer network where the number of weights, and number of neurons, $N_w, N_n = O(d^2 + d\log_2\left(\frac{R}{\sqrt{\delta}}\right) + \frac{3 RL}{\delta})$ , such that:
\begin{equation*}
\sup_{\myvec{x} \in \myreal^d} | g(\myvec{x}) - f(||\myvec{x}||)| < L\sqrt{\delta} + \delta
\end{equation*}
\end{restatable}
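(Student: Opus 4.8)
The plan is to realize the "folding" strategy of Figure~\ref{fig-fold-2d} explicitly with ReLU neurons, reducing the problem in $d$ dimensions to a one-dimensional norm approximation to which Lemma~\ref{lem-3layer} (or its one-dimensional specialization) can be applied. First I would set up a single \emph{fold} as a map $\myreal^d \to \myreal^d$ that reflects the halfspace on one side of a chosen hyperplane through the origin onto the other side while fixing the complementary halfspace; the key point is that such a fold is an isometry on each halfspace, hence preserves $\|\myvec{x}\|$, and that it can be computed by a constant number of ReLU layers and $O(d)$ neurons (a ReLU can extract the signed distance to the hyperplane, and a small fixed gadget adds back a correction term to flip the sign of the negative part). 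I would then show that by a sequence of folds one can drive all but one coordinate of a point on the sphere of radius $r$ into a tiny sector: folding in the $(x_i,x_j)$-plane halves the angular extent, so $O(\log_2(\cdot))$ folds per pair, arranged in a balanced binary-tree fashion over the $d$ coordinates, collapses the representation onto (essentially) the first axis. This is where the $d\log_2 d$ term in the depth and the $d^2$ term in the width come from: $\log_2 d$ levels of the tree, each level doing $O(d)$-neuron folds across all coordinates, but the angular resolution needed to make the discarded coordinates negligible to within the target accuracy contributes the $\log_2(R/\sqrt\delta)$ factors.

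Next I would quantify the error introduced by folding. After folding, I do not recover $\|\myvec{x}\|$ exactly but rather a value $\tilde r$ with $|\tilde r - \|\myvec{x}\|| \le \epsilon$ for some $\epsilon$ controlled by the number of folds (the residual transverse coordinates are exponentially small in the number of folds, so it suffices to fold until $\epsilon = O(\sqrt\delta)$, which costs $O(\log_2(R/\sqrt\delta))$ folds per coordinate pair). I would feed $\tilde r$ into a one-dimensional ReLU approximation of $f$: since $f$ is $L$-Lipschitz and supported on $[0,R]$, a piecewise-linear interpolant on a grid of width $O(\delta/L)$ approximates $f$ to within $\delta$ using $O(RL/\delta)$ neurons, which is the source of the $3RL/\delta$ term. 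Composing, $|g(\myvec{x}) - f(\|\myvec{x}\|)| \le |g(\myvec{x}) - f(\tilde r)| + |f(\tilde r) - f(\|\myvec{x}\|)| \le \delta + L\epsilon$, and choosing the fold count so that $\epsilon \le \sqrt\delta$ gives the stated bound $L\sqrt\delta + \delta$.

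Finally I would tally depth, neurons and weights. Each fold is $O(1)$ layers and $O(d)$ neurons/weights (or $O(1)$ with weight sharing, but $O(d)$ suffices for the stated bound); the balanced tree has $O(\log_2 d)$ levels, and within the deepest part of the schedule each coordinate-pair fold is iterated $O(\log_2(R/\sqrt\delta))$ times, giving depth $O(d\log_2 d + \log_2 d \log_2(R/\sqrt\delta))$ and a neuron/weight count dominated by $O(d^2 + d\log_2(R/\sqrt\delta))$ for the folding stages plus $O(RL/\delta)$ for the final $f$-approximation. I expect the main obstacle to be the geometric/inductive bookkeeping in the folding schedule: one must verify that folds across different coordinate pairs compose correctly (a later fold must not undo the angular compression achieved by an earlier one), that after the full schedule the surviving coordinate genuinely approximates the norm uniformly over all of $\myreal^d$ (including the region $\|\myvec{x}\| > R$, where $f$ is constant so only crude control of $\tilde r$ is needed), and that the exponential decay rate of the transverse coordinates is uniform in the radius $r$. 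Making the "half the angle per fold" claim precise and turning it into a clean recursion is the technical heart of the argument; the ReLU implementation of a single fold and the one-dimensional approximation of $f$ are routine by comparison.
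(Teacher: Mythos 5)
Your proposal follows essentially the same route as the paper: pairwise 2D folds arranged in a balanced binary tree to approximate $\|\myvec{x}\|$ to within $\sqrt{\delta}$, followed by a one-dimensional $O(RL/\delta)$-neuron ReLU approximation of $f$, with the identical error split $\delta + L\sqrt{\delta}$ and the same accounting for depth and width. The ``inductive bookkeeping'' you flag as the technical heart is exactly what the paper supplies: an induction showing the norm error at level $i$ of the tree is amplified by a factor of order $2^{i/2}$, which is absorbed inside the logarithm and is the true source of the $d\log_2 d$ depth and $d^2$ neuron terms (rather than the angular resolution per se).
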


The approach taken here is a constructive one and specifies the architecture of the network needed to approximate $f$. In fact, all of the weights except those in the last layer are specified. The approach is somewhat different to that used to prove Lemma \ref{lem-3layer}. We build a sequence of layers to directly approximate $||\myvec{x}||$ and then approximate $f$ in the last layer. To build our layers, we need a few helper lemmas.

\begin{figure}
\centering
\includegraphics[width=0.4\textwidth]{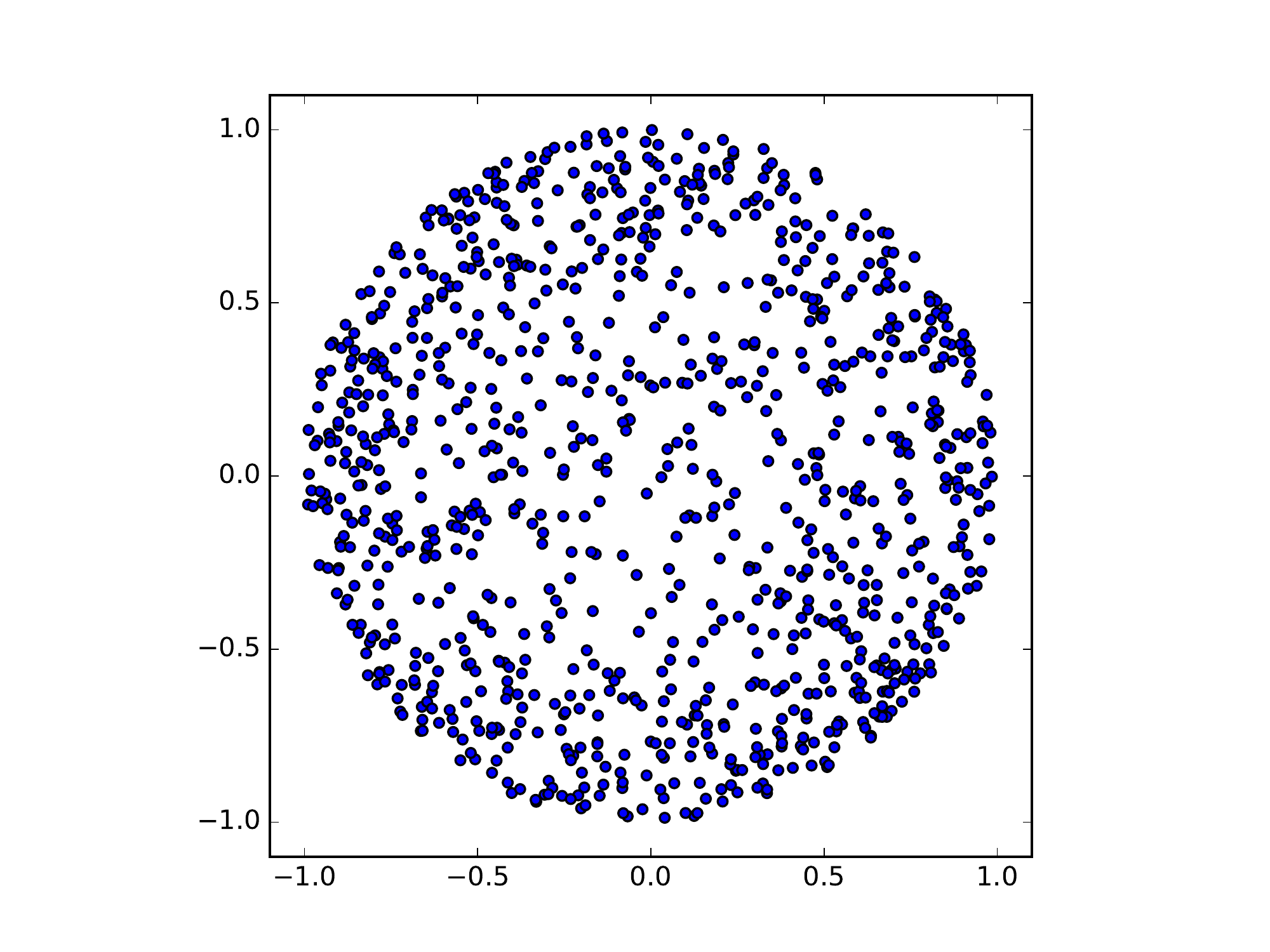}
\includegraphics[width=0.4\textwidth]{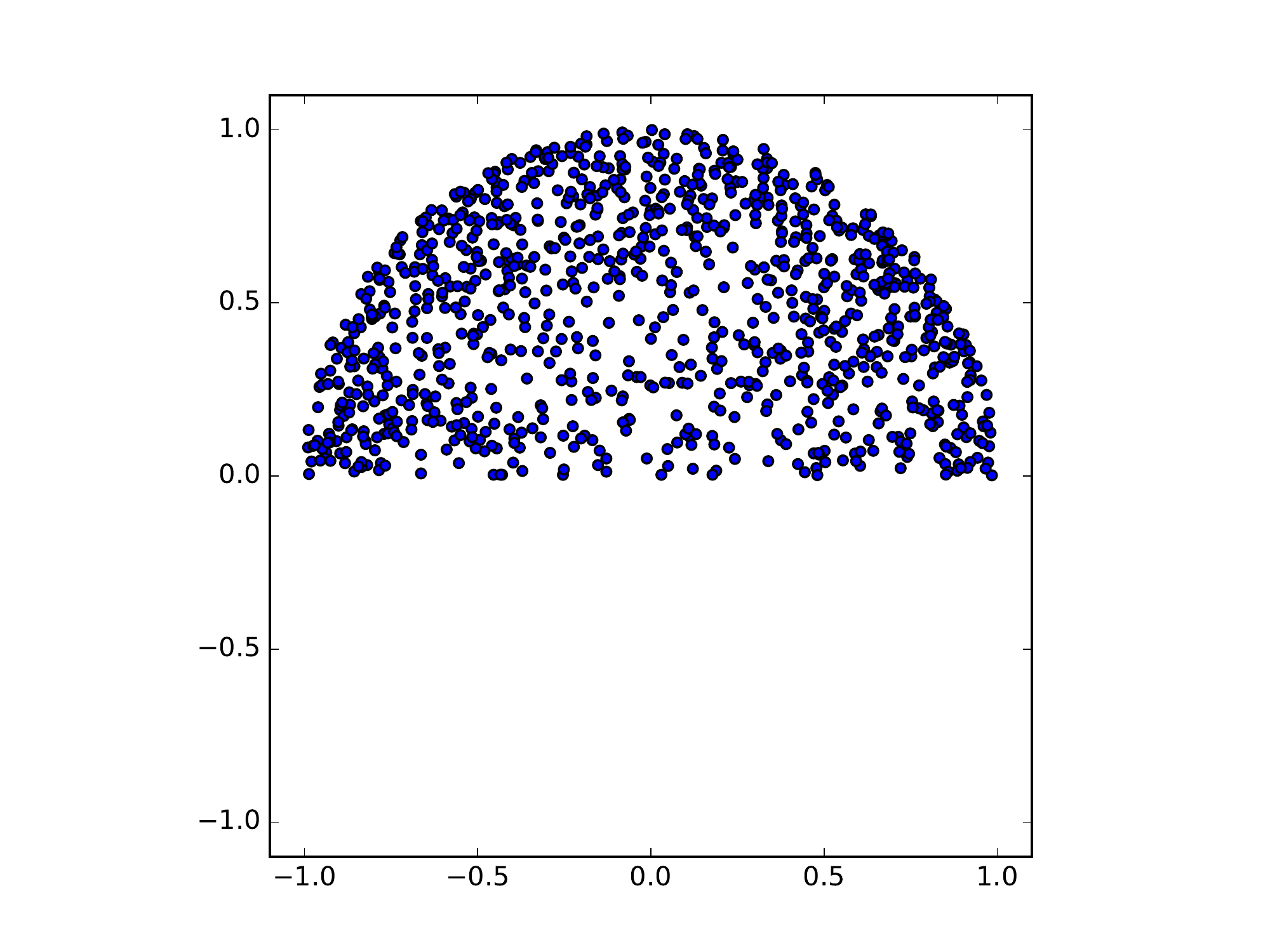}\\
\includegraphics[width=0.4\textwidth]{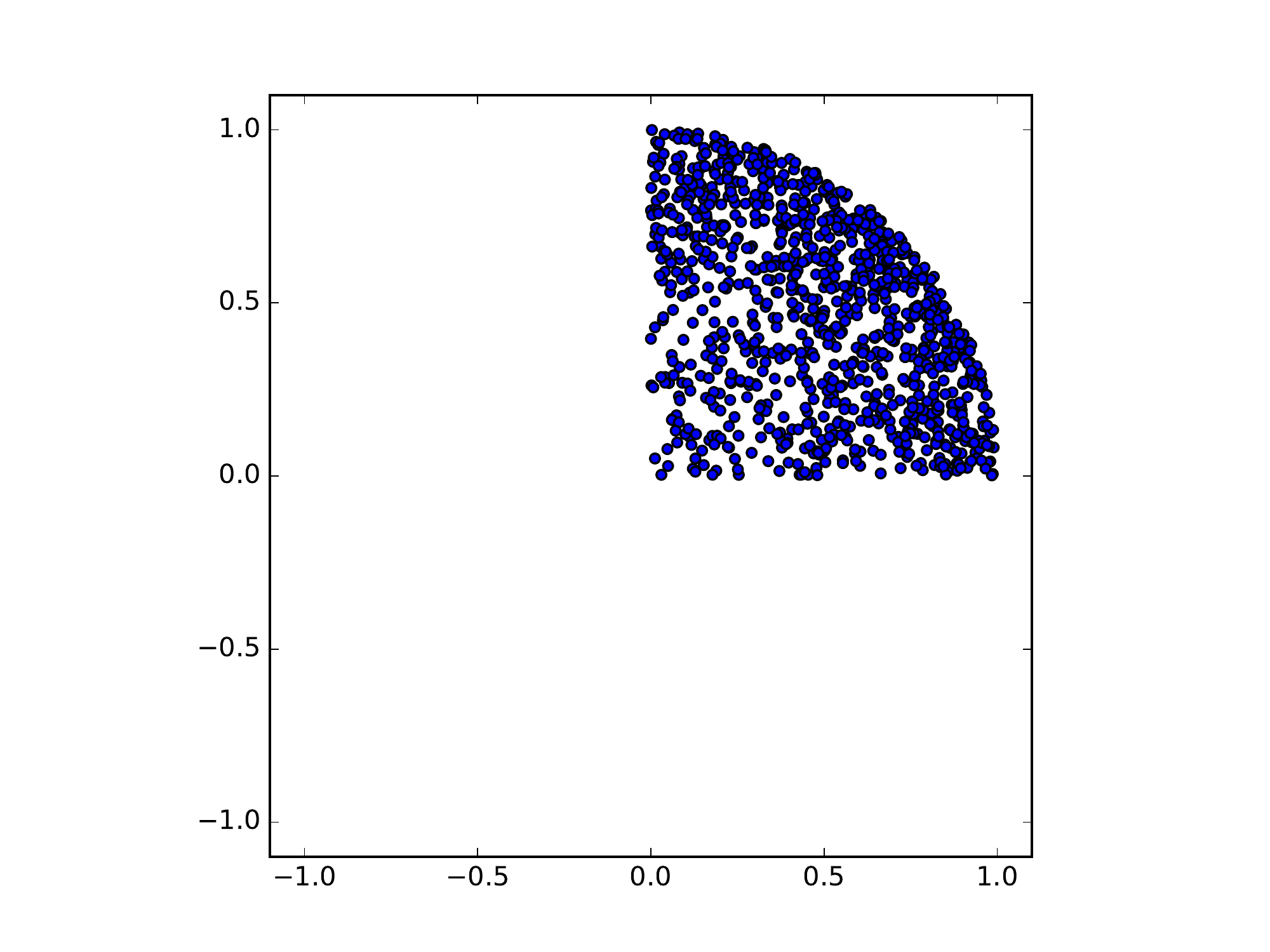}
\includegraphics[width=0.4\textwidth]{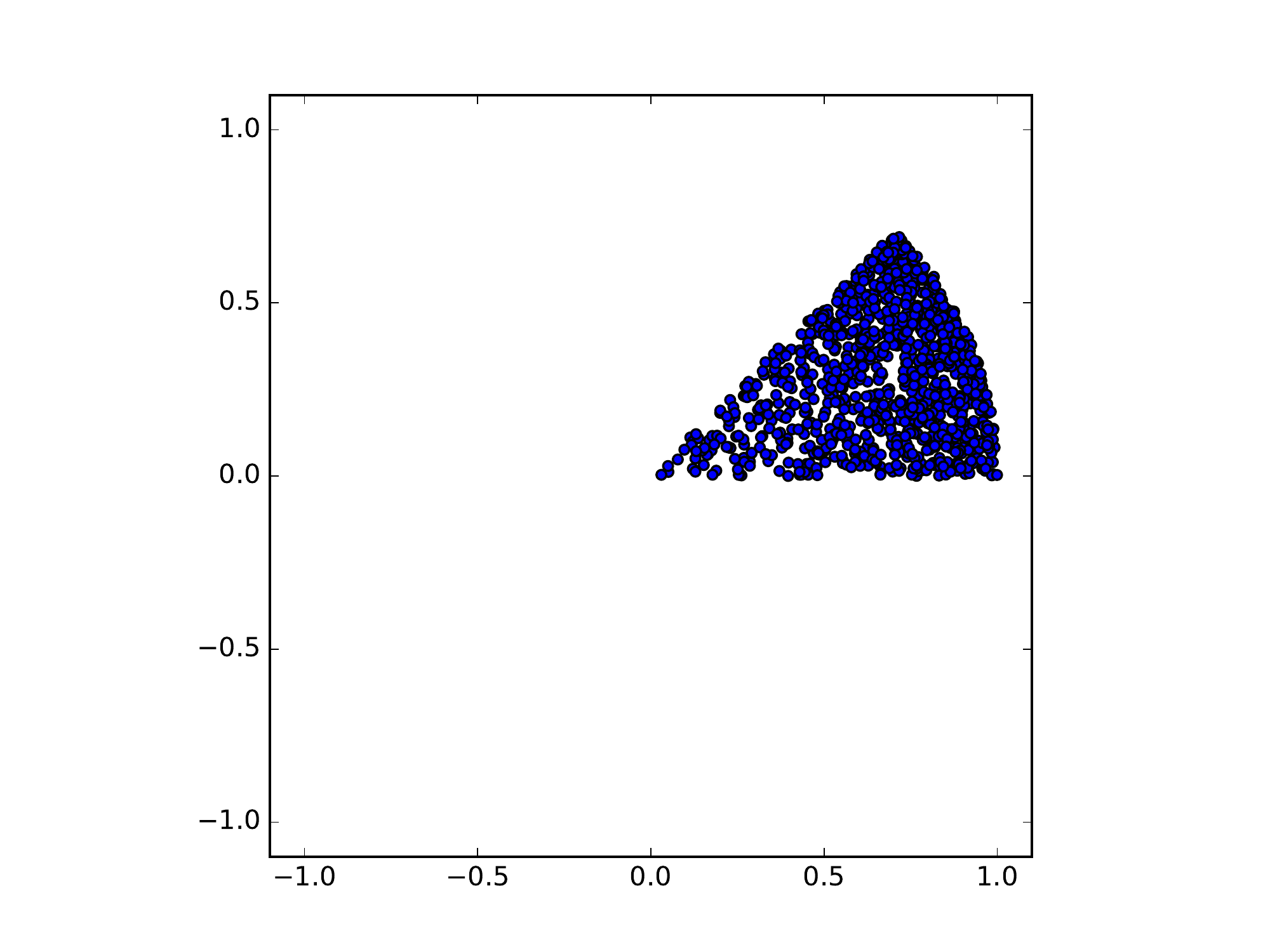}\\
\includegraphics[width=0.4\textwidth]{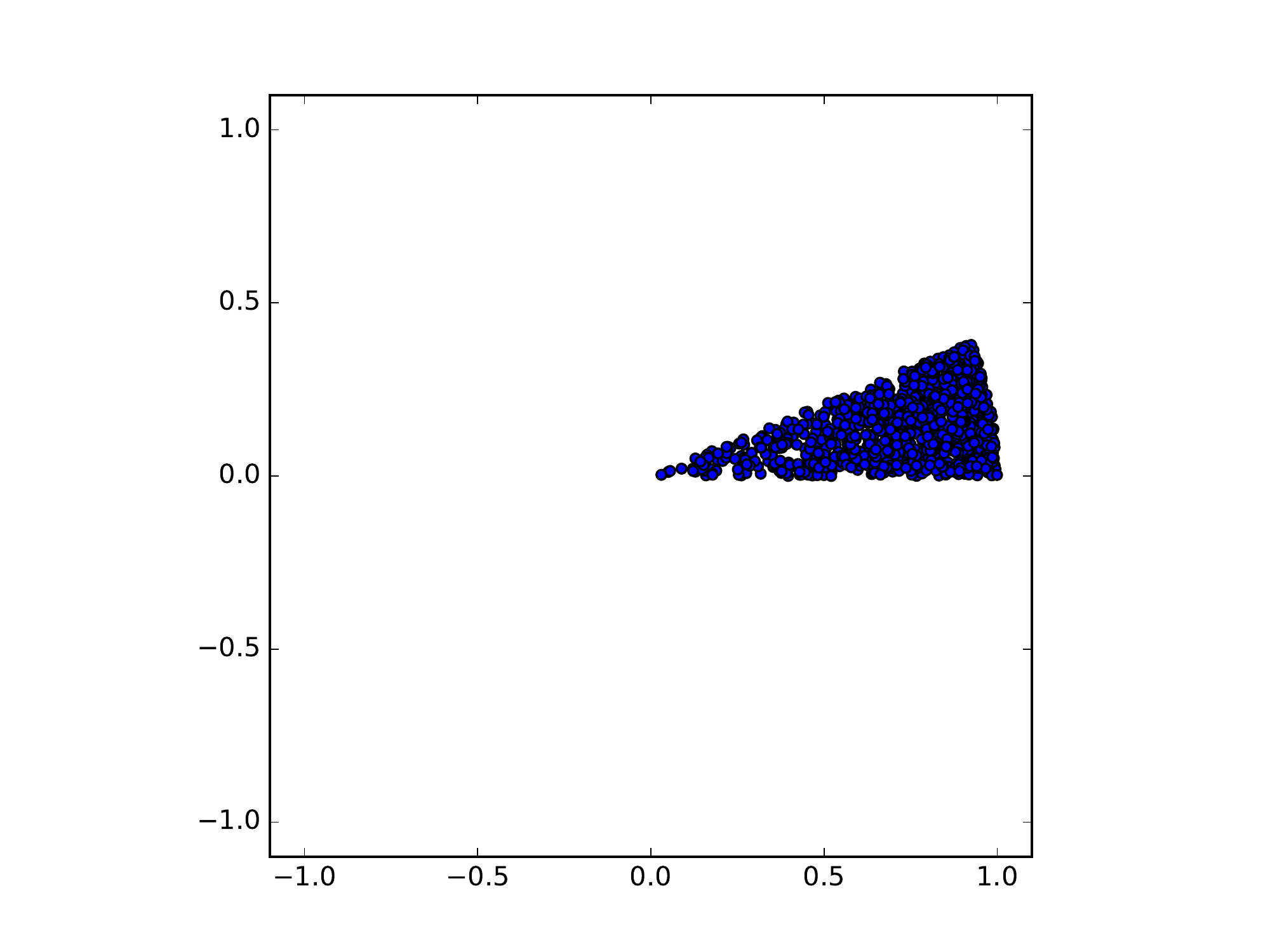}
\includegraphics[width=0.4\textwidth]{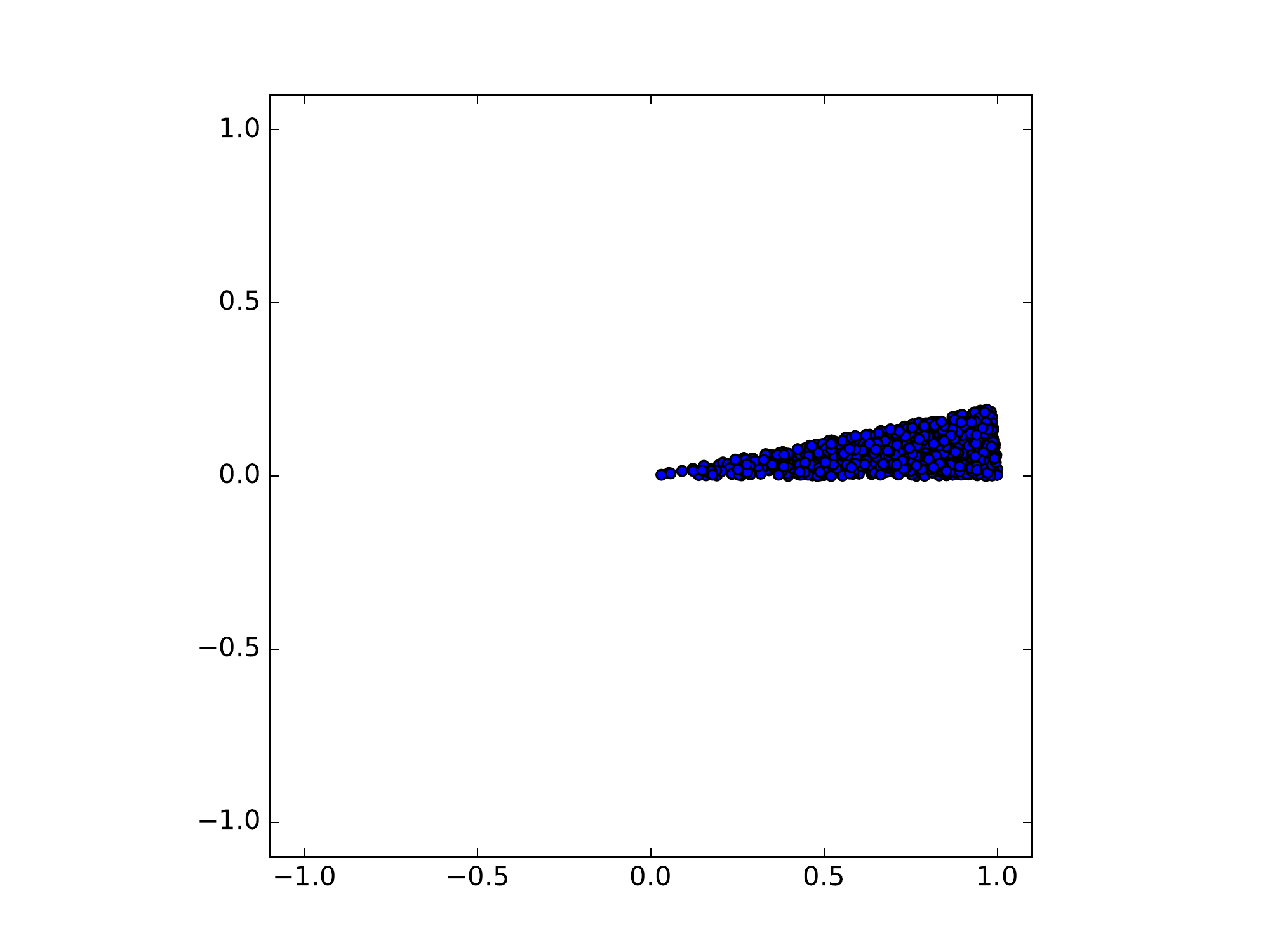}
\caption{Series of folding transformations for 2D. \label{fig-fold-2d}}
\end{figure}

\begin{restatable}[2D fold]{lemma}{lemtwodfold}
There exists a function $g: \myreal^2 \to \myreal^2$, expressible by a ReLU network with 4 ReLU units and 2 sum units that can compute a folding transformation about a line through the origin, represented by the unit direction vector $\myvec{l}=(l_x, l_y)^T$. The function $g$ is of the form:
\begin{equation*}
g(\myvec{x}) = \begin{cases}
\myvec{x} & \myvec{l} \cdot \myvec{x^\perp} > 0 \\
\begin{bmatrix}
l_x^2-l_y^2 & 2 l_x l_y \\
2 l_x l_y & l_y^2-l_x^2
\end{bmatrix}
\myvec{x} & \text{otherwise.}
\end{cases}
\end{equation*}
\label{lem-2d-fold}
\end{restatable}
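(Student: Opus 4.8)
The plan is to directly verify that the claimed piecewise-linear map $g$ (i) performs the desired folding transformation, and (ii) is expressible by a small ReLU network of the stated size. For part (i), observe that a folding about the line through the origin with unit direction $\myvec{l}$ should leave points on one side of the line untouched and reflect points on the other side across the line. Reflection of a vector $\myvec{x}$ across the line spanned by the unit vector $\myvec{l}$ is the standard Householder-type formula $\myvec{x} \mapsto 2(\myvec{l}\cdot\myvec{x})\myvec{l} - \myvec{x}$; expanding this in coordinates with $\myvec{l}=(l_x,l_y)^T$ and $l_x^2+l_y^2=1$ gives exactly the matrix $\begin{bmatrix} l_x^2-l_y^2 & 2l_xl_y \\ 2l_xl_y & l_y^2-l_x^2\end{bmatrix}$ appearing in the statement, so the two branches of $g$ are indeed ``identity'' and ``reflection across the fold line.'' The side is selected by the sign of $\myvec{l}\cdot\myvec{x^\perp}$, where $\myvec{x^\perp}=(-x_y,x_x)^T$ (or any fixed rotation of $\myvec{x}$ by $90^\circ$), which is a linear functional that is positive on one half-plane and negative on the other; this is the natural ``which side of the line'' test. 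It is worth checking that $g$ is continuous across the boundary $\myvec{l}\cdot\myvec{x^\perp}=0$: on that line $\myvec{x}$ is parallel to $\myvec{l}$, and the reflection fixes every such vector, so both branches agree there and $g$ is a well-defined continuous piecewise-linear map.

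For part (ii), the construction is to realize $g$ as a linear combination of ReLU-gated linear pieces. The key algebraic identity is that for a scalar gate $s = \myvec{l}\cdot\myvec{x^\perp}$ and any linear map $A$ and $B$, the map ``$A\myvec{x}$ if $s>0$, else $B\myvec{x}$'' can be written (when $A$ and $B$ agree on $\{s=0\}$, which they do here) using the two rectified quantities $\sigma(s)$ and $\sigma(-s)$ as gating multipliers, or more directly by writing each output coordinate as a sum of two ReLUs of linear functions of $\myvec{x}$. Concretely, I would express each of the two output coordinates of $g$ using two ReLU units — one handling the ``positive-$s$'' contribution and one the ``negative-$s$'' contribution — giving $4$ ReLU units total, and then a sum unit per output coordinate to combine them, giving $2$ sum units, matching the stated resource count. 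The details amount to choosing, for each output coordinate $i\in\{x,y\}$, scalars so that $[A\myvec{x}]_i\,\mathbb{1}[s>0] + [B\myvec{x}]_i\,\mathbb{1}[s<0]$ is reproduced; since $[A\myvec{x}]_i$ and $[B\myvec{x}]_i$ are linear in $\myvec{x}$ and coincide when $s=0$, their difference is proportional to $s$, which lets the two-ReLU combination reproduce the switch exactly.

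The routine calculations are: (a) expanding the Householder reflection formula in coordinates to confirm the matrix; (b) checking boundary continuity (both branches fix vectors parallel to $\myvec{l}$); and (c) exhibiting the explicit ReLU weights realizing the two-piece map with the stated unit counts. I expect the main (minor) obstacle to be bookkeeping in step (c): one must arrange the ReLU units so that the gating on $s=\myvec{l}\cdot\myvec{x^\perp}$ and the two linear output branches are reconstructed with exactly $4$ ReLUs and $2$ summations, which relies on the fortunate fact that the two branches differ by a multiple of the gating functional and hence the reflection branch contributes nothing on its ``inactive'' side without needing an extra correction term. Everything else is elementary linear algebra.
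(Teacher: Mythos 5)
Your step (a) (the matrix is the Householder reflection across the line spanned by $\myvec{l}$) and step (b) (both branches fix vectors on the fold line, so $g$ is continuous there) are correct and consistent with the paper. The problem is step (c), which you defer as ``bookkeeping'' but which is the actual content of the lemma, and the scheme you sketch does not work as stated. You cannot use $\sigma(s)$ and $\sigma(-s)$ as multiplicative gates (the network has no product units), and the alternative of two ReLU units per output coordinate --- one carrying the positive-$s$ branch and one the negative-$s$ branch --- fails: a unit $\sigma(u)$ with $u$ linear equals $u$ on $\{u>0\}$ and $0$ elsewhere, so to reproduce the contribution ``$x$ times the indicator of $\{s>0\}$'' you would need a linear $u$ with $u=x$ and $\{u>0\}=\{s>0\}$, i.e.\ $x$ a positive multiple of $s$, which is false in general. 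Even your correct observation that the two branches differ by a multiple of the gate only gives $g(\myvec{x})=\myvec{x}-2\sigma(-s)\,\myvec{l}^\perp$ with $s=\myvec{l}\cdot\myvec{x^\perp}$, and re-expressing the residual identity map through ReLUs in the standard basis costs four more units, for five in total rather than four.

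The missing idea, which is what the paper's Figure~\ref{fig-relu-fold-2d} and its four-case verification encode, is to work in the rotated orthonormal basis $\{\myvec{l},\myvec{l}^\perp\}$ so that the gate variable is itself one of the coordinates. Writing $t=\myvec{l}\cdot\myvec{x}$ and $s=\myvec{l}\cdot\myvec{x^\perp}$, one has $\myvec{x}=t\,\myvec{l}-s\,\myvec{l}^\perp$, the reflection is $t\,\myvec{l}+s\,\myvec{l}^\perp$, and hence $g(\myvec{x})=t\,\myvec{l}-|s|\,\myvec{l}^\perp$ in both cases. The four ReLU units compute $\sigma(t),\sigma(-t),\sigma(s),\sigma(-s)$, and the two sum units form $t=\sigma(t)-\sigma(-t)$ and $|s|=\sigma(s)+\sigma(-s)$ weighted by the components of $\myvec{l}$ and $\myvec{l}^\perp$; each output coordinate therefore draws on all four ReLU units, not two. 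With that construction substituted for your step (c), the remainder of your argument goes through and coincides with the paper's proof, which simply verifies the four sign patterns of $(s,t)$ directly.
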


The requisite ReLU network is shown in Figure \ref{fig-relu-fold-2d}. Only one of the nodes labeled $x_{-}$ ($y_{-}$) and $x_{+}$ ($y_{+}$) are active at any one time. Therefore there are four possible cases depending on which two nodes are active. Note that $x_{-}$ is active when $\myvec{l} \cdot \myvec{x^\perp} < 0$ and $x_{+}$ is active when $\myvec{l} \cdot \myvec{x^\perp} > 0$.
To approximate the 2D norm, we simply stack layers of the type shown in Figure \ref{fig-relu-fold-2d} with suitable choice of $l_x, l_y$ at each layer. Note that the summation nodes aren't required since they can be incorporated into the summations and weights of the next ReLU layer. 
These 2D folds can be used to estimate the norm of a vector as per the following lemma.

\begin{figure}
\centering
\includegraphics[width=0.9\textwidth]{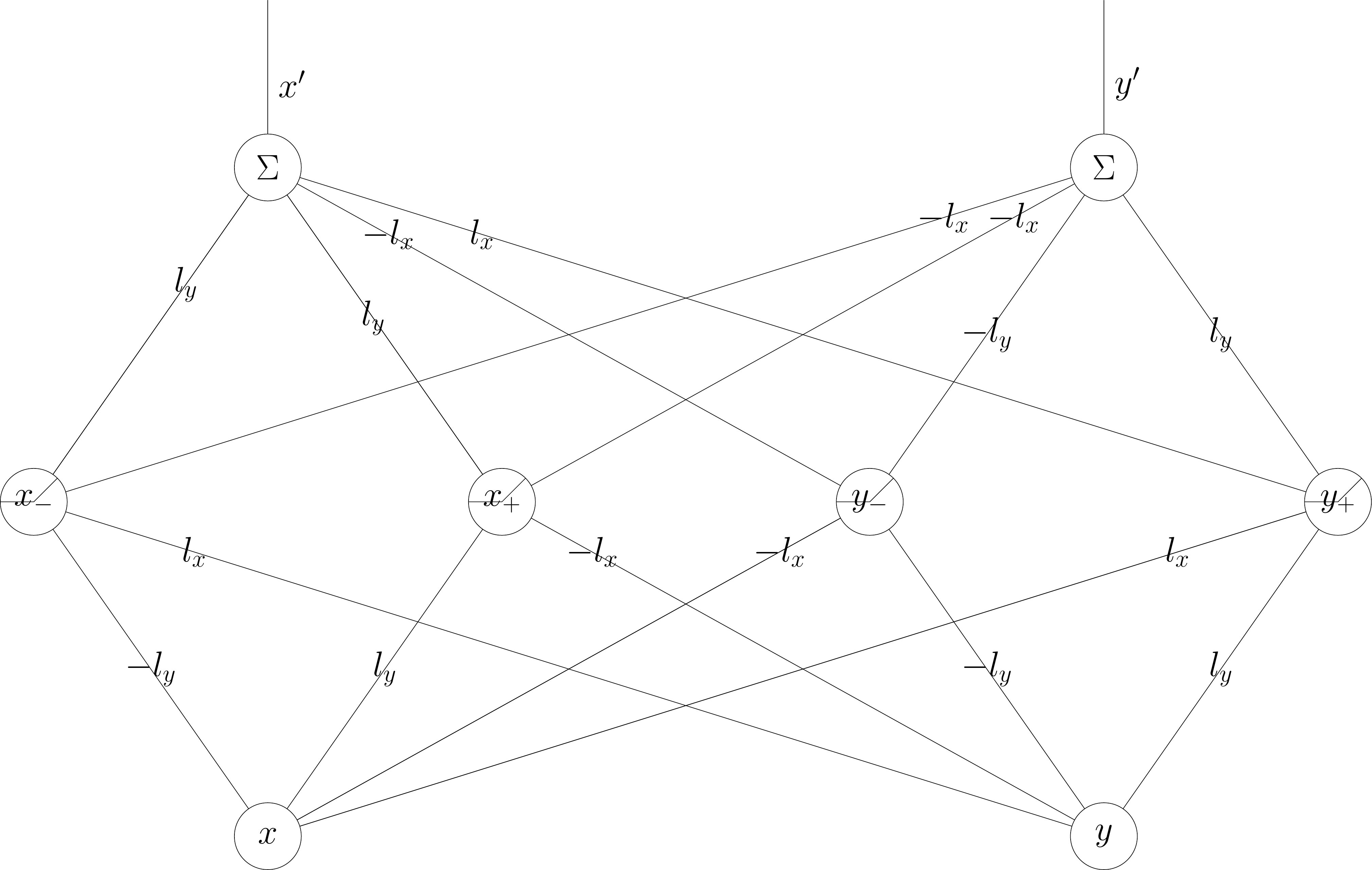}
\caption{A network to produce a 2D fold. \label{fig-relu-fold-2d}}
\end{figure}

\begin{restatable}[Approximate $||\myvec{x}||, x \in \myreal^2, ||x||<R$]{lemma}{lemapproxtwod}
There exists a function $g$, expressible by a ReLU network with no more than $\log_2\left(R \frac{\pi}{\delta}\right)$ layers and $4$ nodes per layer such that:
\begin{equation*}
\sup_{\myvec{x}\in \myreal^2, ||\myvec{x}||\le R} | g(\myvec{x}) - ||\myvec{x}|| | \le \delta
\end{equation*}
\label{lem-approx-2d}
\end{restatable}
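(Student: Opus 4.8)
The plan is to iterate the 2D fold of Lemma~\ref{lem-2d-fold} to drive a vector of norm at most $R$ into a thin angular sector near the positive $x$-axis, and then read off the first coordinate as an approximation to the norm. Concretely, given $\myvec{x}\in\myreal^2$ with $\|\myvec{x}\|\le R$, each fold preserves the norm exactly (the matrix in Lemma~\ref{lem-2d-fold} is a reflection, hence orthogonal, and the identity branch obviously preserves the norm), so after any number of folds we still have a vector of norm $\|\myvec{x}\|$. The goal is therefore purely angular: if after $k$ folds the vector lies in a sector of half-angle $\theta_k$ about the positive $x$-axis, then its first coordinate is $\|\myvec{x}\|\cos\phi$ for some $|\phi|\le\theta_k$, and $\big|\|\myvec{x}\|\cos\phi - \|\myvec{x}\|\big| = \|\myvec{x}\|(1-\cos\phi) \le \|\myvec{x}\|\,\theta_k^2/2 \le R\,\theta_k^2/2$.

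First I would set up the angular halving. Start by folding about a suitable line so that the image lands in the half-plane $x\ge 0$ (equivalently, an angular interval of width $\pi$); this costs one fold. Then, inductively, if the current vector is known to lie in an angular sector of width $w$, folding about the bisector of that sector reflects the ``far'' half onto the ``near'' half, producing a vector in a sector of width $w/2$ — this is exactly the behaviour illustrated in Figure~\ref{fig-fold-2d}, and the direction vector $\myvec{l}$ of the fold line at each stage is a fixed, explicitly computable unit vector (a rotation of the previous one), so all weights in these layers are specified in advance. After $k$ folds the sector has width $\pi/2^{k-1}$, i.e.\ half-angle $\theta_k = \pi/2^{k}$. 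I would then take $g(\myvec{x})$ to be the first coordinate of the resulting vector (and note that we may also need a final ReLU or sign fix-up to place the sector symmetrically about the axis rather than in $[0,\theta]$, but since $\cos$ is even this does not affect the bound).

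Next I would choose $k$ to meet the error target. From the estimate above, the error is at most $R\theta_k^2/2 = R\pi^2/2^{2k+1}$. Demanding this be $\le\delta$ gives $2^{2k+1}\ge R\pi^2/\delta$, i.e.\ $k \ge \tfrac12\log_2\!\big(R\pi^2/(2\delta)\big)$, which is comfortably below $\log_2(R\pi/\delta)$ for the relevant range of $\delta$; so taking $k=\lceil\log_2(R\pi/\delta)\rceil$ layers suffices. Each fold layer uses the $4$ ReLU units of Lemma~\ref{lem-2d-fold} (the sum units, as the text notes after Figure~\ref{fig-relu-fold-2d}, can be absorbed into the next layer's weights), giving $4$ nodes per layer, which matches the claimed bound. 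I would close by remarking that the construction is entirely explicit: the fold directions $\myvec{l}$ at each layer depend only on $R$ and $\delta$ (through $k$), not on the input.

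The main obstacle — really the only subtle point — is bookkeeping the angular intervals carefully: one must track not just the \emph{width} of the sector containing the folded vector but its exact location, because a fold about a line only halves a sector if that line actually bisects it, and the reflection swaps which side of the bisector the vector ends up on. Handling the initial reduction to a half-plane, and making sure the running sector stays adjacent to (and ultimately straddles, or sits just above) the positive $x$-axis so that the first coordinate is nonnegative and close to the norm, requires choosing the sequence of fold directions consistently; once that is pinned down, the norm-preservation and the $1-\cos\phi\le\phi^2/2$ estimate finish the proof with room to spare.
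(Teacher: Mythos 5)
Your proposal is correct and follows essentially the same route as the paper's own proof: stack the 2D folds of Lemma~\ref{lem-2d-fold} with successively bisecting fold directions so that after $f$ layers the (norm-preserved) image lies within angle $\pi/2^{f}$ of the positive $x$-axis, then output the first coordinate and bound $R(1-\cos(\pi/2^{f}))$. The only cosmetic difference is that you use the sharper estimate $1-\cos\phi\le\phi^{2}/2$ where the paper relaxes $2\sin^{2}$ to $2\sin$ and then to the linear bound, but both yield the stated $\log_2(R\pi/\delta)$ layer count.
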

\begin{proof}
The proof is short and simple. After $f$ layers, each data point will be within an angle of $\frac{\pi}{2^{f}}$ of the $x$-axis. Simple geometry and appropriate approximations leads to:
\begin{align*}
\delta &=||\myvec{x}||-||\myvec{x}||\cos\left(\frac{\pi}{2^{f}}\right)\\
&\le R\left(1-\cos\left(\frac{\pi}{2^{f}}\right)\right)\\
&\le R\left(2 \sin\left(\frac{\pi}{2^{f+1}}\right)\right)\\
&\le R\left(\frac{\pi}{2^{f}}\right)\\
f &\le \log_2\left(R \frac{\pi}{\delta}\right)
\end{align*}
\end{proof}

The following lemma generalises this construction to folds in $d$ dimensions.
\begin{restatable}[Approximate $||\myvec{x}||, x \in \myreal^d$]{lemma}{lemapproxvecx}
\label{lem-approx-vecx}
There exists a function $g$, expressible by a ReLU network with:
\begin{align*}
\small
N_l \le &\log_2(d)\\
&\log_2\left(\frac{R \pi}{\delta} \left[\lemfive{(d+1)}{d} \right]\right) \\
N_n \le &4(d-1) \\
&\log_2\left(\frac{R \pi}{\delta} \left[\lemfive{(d+1)}{d} \right]\right) \\
N_w \le& 8(d-1)\\
&\log_2\left(\frac{R \pi}{\delta} \left[\lemfive{(d+1)}{d} \right]\right)
\end{align*}
such that:
\begin{equation*}
\sup_{\myvec{x}\in \myreal^d, ||\myvec{x}||\le R} | g(\myvec{x}) - ||\myvec{x}|| | \le \delta
\end{equation*}
\end{restatable}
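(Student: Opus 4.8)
The plan is to reduce the $d$-dimensional norm to a cascade of the two-dimensional norm approximations of Lemma~\ref{lem-approx-2d}, arranged in a divide-and-conquer tree so that only $O(\log_2 d)$ of them lie on any path. Concretely I would pair up the coordinates of $\myvec{x}$, apply to each pair a block of elementary $2$D folds (Lemma~\ref{lem-2d-fold}) that rotates the pair to within a small angle of a chosen axis, keep the on-axis coordinate of each pair and drop the (now near-zero) off-axis one, and recurse on the resulting $\lceil d/2\rceil$ values; after $\lceil\log_2 d\rceil$ such rounds one number remains and, by construction, it approximates $\|\myvec{x}\|$. The $\log_2(d)$ prefactor in $N_l,N_n,N_w$ is exactly this recursion depth. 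Since a binary tree on $d$ leaves has $d-1$ internal nodes and each internal node is a stack of $f$ elementary $2$D folds using $4$ ReLU units / $8$ weights apiece, the total neuron and weight counts come out as $\le 4(d-1)f$ and $\le 8(d-1)f$, and the depth as $\le (\log_2 d)\,f$, because the blocks at a given recursion level run in parallel.

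First I would record the one-round guarantee already contained in Lemma~\ref{lem-approx-2d}: stacking $f$ elementary $2$D folds sends a pair of norm $\le R$ to within angle $\pi/2^{f}$ of the axis, so the retained coordinate differs from the pair's true partial norm by at most $R(1-\cos(\pi/2^{f})) \le R\pi/2^{f}$. Then I would set up the error recursion across rounds: if the two inputs to a combining block are partial-norm approximations with error $\le\epsilon$ each, then since $(u,v)\mapsto\sqrt{u^2+v^2}$ is $1$-Lipschitz the combined value has error at most $\sqrt{2}\,\epsilon$ plus the block's own $R\pi/2^{f}$; unrolling this over the $\log_2 d$ rounds shows that, to finish with total error $\le\delta$, it suffices to choose $f=\log_2\!\left(\frac{R\pi}{\delta}\left[\lemfive{(d+1)}{d}\right]\right)$, where the bracket is the accumulated amplification factor and its closed form (a geometric-type sum governed by the recurrence $a_d=2a_{d-2}+1$, i.e.\ $a_d=\lemfivet{d}$) is exactly what the earlier fold-counting lemma provides. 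Multiplying this $f$ by the depth $\log_2 d$ and by the per-node costs above produces the three displayed bounds.

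The main obstacle is the error bookkeeping rather than the construction. Two things need care. First, the discarded off-axis coordinates are only approximately zero, and a careless accounting re-introduces their mass at every later level; one must show the accumulated contribution is still controlled by the single $\left[\lemfive{(d+1)}{d}\right]$ factor, and this is where both the asymmetry between $\lemfivet{d+1}$ and $\lemfivet{d}$ (odd $d$ and the leftover unpaired coordinate) and the $\sqrt{2}$ weight on $\lemfivet{d}$ (the $\sqrt{2}$-Lipschitz combination of two errors) enter. Second, one must check that each $2$D fold really can be taken as a coordinate-restricted fold of all of $\myreal^d$, so that Lemma~\ref{lem-2d-fold}'s gadget applies verbatim, the summation nodes can be folded into the next layer, and nothing in the recursion inflates the effective radius above $R$. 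With these settled, tallying the resource counts is routine.
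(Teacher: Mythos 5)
Your proposal follows essentially the same route as the paper's proof: a binary tree of pairwise $2$D fold blocks from Lemma~\ref{lem-approx-2d}, an error recursion in which each combining level contributes an additive $R\pi/2^{f}$ and amplifies incoming error by $\sqrt{2}$ (you get this from $1$-Lipschitzness of $(u,v)\mapsto\sqrt{u^2+v^2}$, the paper by direct algebraic expansion of $\sqrt{(\sqrt{a}+\delta_i)^2+(\sqrt{b}+\delta_i)^2}$, but these are the same bound), and the identical $4(d-1)f$ / $8(d-1)f$ / $(\log_2 d)f$ resource tally over the $d-1$ internal tree nodes. The recurrence you identify for the accumulated factor matches the paper's inductive formula, so the argument is correct and not materially different.
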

We note that a fold in a 2D plane in $\myreal^d$ will leave all coordinates perpendicular to the plane unchanged. We can therefore apply the approximation of Lemma \ref{lem-approx-2d} to pairs of input coordinates to produce $d/2$ new coordinates. Then apply the same reduction to produce $d/4$ coordinates and continue on this way until there is only one coordinate left. In effect, we are calculating the norm via the following scheme:
{\small
\begin{align*}
&\sqrt{x_1^2 + x_2^2 + x_3^2 + \cdots + x_d^2} =\\
&\sqrt{\sqrt{ \sqrt{x_1^2 + x_2^2}^2 + \sqrt{x_3^2+x_4^2}^2}^2 \cdots \sqrt{\cdots + \sqrt{x_{n-1}^2 + x_{n-2}^2}^2}^2}
\end{align*}
}
Figure \ref{fig-multiple-folds} shows the resulting network. The proof is by induction and is rather long so is not produced here but appears in full in the supplementary material.

At this point we make use of Lemma 19 from \citet{eldan2016power} which we reproduce here:
\begin{restatable}[Lemma 19 from \citet{eldan2016power}]{lemma}{lemnineteen}
\label{lem19}
Let $\sigma(z) = \max(0,z)$ be the ReLU activation function, and fix $L,\delta,R>0$. Let $f: \myreal \to \myreal$ which is constant outside an interval $[-R,R]$. There exist scalars $a$, $\{\alpha_i,\beta_i\}_{i=1}^w$, where $w \le 3\frac{RL}{\delta}$, such that the function:
\begin{equation}
h(x) = a + \sum_{i=1}^w \alpha_i \sigma(x-\beta_i)
\end{equation}
is L-Lipschitz and satisfies:
\begin{equation}
\myover{\sup}{x \in \myreal} | h(x) - f(x) | \le \delta.
\end{equation}
Moreover, one has $|\alpha_i| \le 2L$ and $w \le 3 \frac{RL}{\delta}$.
\end{restatable}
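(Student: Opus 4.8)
The plan is to prove this as a purely one-dimensional piecewise-linear interpolation statement. I would build the continuous piecewise-linear function $h$ that interpolates $f$ at the nodes of a uniform grid on $[-R,R]$, and then use the standard fact that any such function is \emph{exactly} of the form $h(x)=a+\sum_{i=1}^{w}\alpha_i\sigma(x-\beta_i)$, where the $\beta_i$ are its breakpoints, $a$ is its value to the left of all breakpoints, and each $\alpha_i$ is the jump in slope at $\beta_i$. All three claimed properties then reduce to elementary slope estimates.

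First I would fix $m=\lceil RL/\delta\rceil$ and place $m+1$ equally spaced nodes $-R=\beta_1<\cdots<\beta_{m+1}=R$ at spacing $\Delta=2R/m$, and define $h$ to equal $f(-R)$ on $(-\infty,\beta_1]$, to equal $f(R)$ on $[\beta_{m+1},\infty)$, and on each $[\beta_i,\beta_{i+1}]$ to be the affine interpolant of $f$ through the two endpoint values. Since $f$ is constant outside $[-R,R]$ and $L$-Lipschitz (hence continuous), $h$ agrees with $f$ exactly outside $[-R,R]$, so only $[-R,R]$ matters. On an interval of length $\Delta$, if $\ell$ interpolates the $L$-Lipschitz $f$ at the endpoints, then writing $x=(1-t)\beta_i+t\beta_{i+1}$ gives $f(x)-\ell(x)=(1-t)\bigl(f(x)-f(\beta_i)\bigr)+t\bigl(f(x)-f(\beta_{i+1})\bigr)$, so $|f(x)-\ell(x)|\le 2t(1-t)L\Delta\le\tfrac12 L\Delta = LR/m\le\delta$; this yields $\sup_x|h(x)-f(x)|\le\delta$.

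Next I would translate $h$ into ReLU form. Let $s_0=0$, let $s_i$ be the slope of $h$ on $[\beta_i,\beta_{i+1}]$ for $1\le i\le m$, and set $s_{m+1}=0$; each $s_i$ with $1\le i\le m$ is a secant slope of an $L$-Lipschitz function, so $|s_i|\le L$. Taking $a=f(-R)$, $w=m+1$, and $\alpha_i=s_i-s_{i-1}$ reproduces $h$, because $\alpha_i\sigma(x-\beta_i)$ adds slope $\alpha_i$ for $x>\beta_i$ and nothing for $x<\beta_i$, so the total slope on $[\beta_j,\beta_{j+1}]$ is $\sum_{i\le j}\alpha_i=s_j$ and the value left of $\beta_1$ is $a=f(-R)$ (note $\sum_{i=1}^{w}\alpha_i=s_{m+1}-s_0=0$, consistent with the constant right tail). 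Then $|\alpha_i|=|s_i-s_{i-1}|\le|s_i|+|s_{i-1}|\le 2L$; every slope of $h$ is some $s_i$ with $|s_i|\le L$, so $h$ is $L$-Lipschitz; and $w=\lceil RL/\delta\rceil+1\le 3RL/\delta$ whenever $RL/\delta\ge1$.

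The main (and essentially only) obstacle is the constant-chasing: absorbing the two boundary breakpoints — the one at $-R$ where the slope leaves $0$ and the one at $R$ where it returns to $0$ — into the bound $w\le 3RL/\delta$, and handling the degenerate regime $RL<\delta$ separately, where an $L$-Lipschitz $f$ varies by at most $2RL<2\delta$ on $[-R,R]$ so the constant $h\equiv\tfrac12(\sup f+\inf f)$ already works with $w=0$. I would also flag that, exactly as in Lemma~\ref{lem18}, $f$ is implicitly assumed $L$-Lipschitz on $[-R,R]$, since that hypothesis is precisely what powers both the $\tfrac12 L\Delta$ interpolation bound and the $|s_i|\le L$ slope bound.
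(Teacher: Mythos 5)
The paper does not prove this statement at all: it is imported verbatim from \citet{eldan2016power} and explicitly ``stated without proof'' in the appendix, so there is no in-paper argument to compare against. Your proof is correct and is essentially the standard (and the cited source's) argument — piecewise-linear interpolation of $f$ on a uniform grid of $\lceil RL/\delta\rceil$ subintervals, the $\tfrac12 L\Delta$ interpolation error bound, and the exact representation of a continuous piecewise-linear function as $a+\sum_i\alpha_i\sigma(x-\beta_i)$ with $\alpha_i$ the slope jumps — and you correctly handle the boundary breakpoints, the degenerate case $RL<\delta$, and the implicit hypothesis that $f$ is $L$-Lipschitz, which the statement as reproduced here omits.
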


We can now prove the main theorem (some steps left out for brevity):
\main*
\begin{proof}
From Lemma \ref{lem-approx-vecx} we can approximate $||\myvec{x}||$ to within $\sqrt{\delta}$ and using Lemma \ref{lem19}:
\begin{equation*}
f(||\myvec{x}||)+L\sqrt{\delta}-\delta \le g(||\myvec{x}||+\sqrt{\delta}) \le f(||\myvec{x}||)+L\sqrt{\delta}+\delta
\end{equation*}
and 
\begin{equation*}
f(||\myvec{x}||)-L\sqrt{\delta}-\delta \le g(||\myvec{x}||+\sqrt{\delta}) \le f(||\myvec{x}||)-L\sqrt{\delta}+\delta
\end{equation*}
therefore:
\begin{equation*}
f(||\myvec{x}||)-L\sqrt{\delta}-\delta \le g(||\myvec{x}||+\sqrt{\delta}) \le f(||\myvec{x}||)+L\sqrt{\delta}+\delta
\end{equation*}

\noindent The number of weights and neurons required by Lemma \ref{lem19} is $3 \frac{RL}{\delta}$. The number of weights and neurons required to estimate $||\myvec{x}||$ is given by Lemma \ref{lem-approx-vecx} (substituting $\sqrt{\delta}$ for $\delta$). Stack the network from Lemma \ref{lem19} (1 layer) onto the end of the network from Lemma \ref{lem-approx-vecx} ($O\left(d\log_2(d) + \log_2(d)\log_2\left(R/\sqrt{\delta}\right)\right)$ layers), thus requiring a total number of neurons no more than:
\begin{align*}
N_n \le \bigg[ &4(d-1) \\
& \log_2{\left(\frac{R\pi}{\sqrt{\delta}} \left[ \lemfive{(d-1)}{d} \right]\right)} \bigg]\\
+ &\frac{3RL}{\delta} \\
N_n &= O\left(d^2 + d\log_2\left(\frac{R}{\sqrt{\delta}}\right) + \frac{3 RL}{\delta}\right)
\end{align*}
and a total number of weights no more than:
\begin{align*}
N_w \le \bigg[  &8(d-1) \\
& \log_2{\left(\frac{R\pi}{\sqrt{\delta}} \left[ \lemfive{(d-1)}{d} \right]\right)}
\bigg] \\
+ &\frac{3RL}{\delta}\\
N_w &= O\left(d^2 + d \log_2\left(\frac{R}{\sqrt{\delta}}\right) + \frac{3 RL}{\delta}\right)
\end{align*}
\end{proof}

Again, there is an obvious weight-sharing corollary:

\begin{corol}[Deep weight sharing network]
Let $\myvec{x} \in \myreal^d$, and $\sigma(z) = \max(0,z)$. Let $f$ be an $L$-Lipschitz function supported on $[0,R]$. Fix $L,\delta,R > 0$. There exists a function $g$ expressible by a network where the number of weights is at most  $N_w = O\left(d + \log_2\left(\frac{R}{\sqrt{\delta}}\right) + \frac{3 RL}{\delta}\right)$ , such that:
\begin{equation*}
\sup_{\myvec{x} \in \myreal^d} | g(\myvec{x}) - f(||\myvec{x}||)| < L\sqrt{\delta} + \delta
\end{equation*}
\end{corol}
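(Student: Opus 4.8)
The plan is to reuse the construction behind Theorem~\ref{thm:main} unchanged and to observe that the sub-network approximating $\|\myvec{x}\|$ is assembled from many identical pieces whose weights can be tied together. Recall that Lemma~\ref{lem-approx-vecx} builds the norm approximation as a binary tree of depth $\log_2 d$: at stage $k$ it runs $d/2^{k}$ independent copies of the 2D norm-approximation network of Lemma~\ref{lem-approx-2d}, each of which is a stack of $O\!\left(\log_2\!\left(\tfrac{R\pi}{\sqrt\delta}\left[\lemfive{(d+1)}{d}\right]\right)\right)=O\!\left(d+\log_2\!\left(\tfrac{R}{\sqrt\delta}\right)\right)$ fold layers of the type given by Lemma~\ref{lem-2d-fold}. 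The schedule of fold directions $\myvec{l}_1,\myvec{l}_2,\dots$ used inside such a copy is a fixed, data-independent sequence that merely bisects the angular sector repeatedly, and --- taking the worst-case length of this schedule across all stages --- the \emph{same} schedule can be used in every copy in the whole tree.

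First I would make the tying explicit: declare that all neurons occupying the ``$j$-th fold layer'' slot of any 2D-norm block, in any stage of the tree, share one set of weights, which by Lemma~\ref{lem-2d-fold} is a constant number of parameters determined by $\myvec{l}_j$. Because weight sharing does not change the function a network computes, the resulting $g$ is precisely the function produced in Theorem~\ref{thm:main}; hence the error analysis (including the $\sqrt\delta$ shift of the argument and the single application of Lemma~\ref{lem19}) carries over verbatim and $\sup_{\myvec{x}\in\myreal^d}|g(\myvec{x})-f(\|\myvec{x}\|)|<L\sqrt\delta+\delta$ still holds.

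It then remains to count distinct weights. By Lemma~\ref{lem-2d-fold} one fold layer uses only $O(1)$ weights, and the tied schedule has $O\!\left(d+\log_2\!\left(\tfrac{R}{\sqrt\delta}\right)\right)$ layers, so the entire norm sub-network accounts for $O\!\left(d+\log_2\!\left(\tfrac{R}{\sqrt\delta}\right)\right)$ distinct weights. The last layer comes from Lemma~\ref{lem19} applied once to the single output coordinate; it occurs only once, so nothing is tied there and it contributes its $w\le 3\frac{RL}{\delta}$ weights unchanged. Adding the two gives $N_w=O\!\left(d+\log_2\!\left(\tfrac{R}{\sqrt\delta}\right)+\tfrac{3RL}{\delta}\right)$, as claimed (the neuron count is, of course, still that of Theorem~\ref{thm:main}).

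The step I expect to need the most care is the assertion that one direction schedule serves every stage of the tree. The magnitude bound on the vectors fed into a 2D block grows from about $R\sqrt2$ at the leaves to about $R\sqrt d$ at the root, and the tolerated angular error must shrink up the tree to keep the accumulated error under control --- this is exactly what the $\left[\lemfive{(d+1)}{d}\right]$ factor in Lemma~\ref{lem-approx-vecx} already encodes. One therefore uses the longest schedule required anywhere (the one at the root) in every block, and checks that the shorter schedules needed lower down are genuine prefixes of it. No new estimate is needed beyond the worst-case bound already proved in Lemma~\ref{lem-approx-vecx}; the only new observation is that this bound is uniform in the stage index, which is precisely what makes the $\log_2 d$ factor present in the neuron count disappear from the weight count.
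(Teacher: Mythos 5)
Your proposal is correct and follows exactly the route the paper intends: the paper states this corollary without proof (calling it ``obvious''), and your argument supplies the missing details in the natural way --- tying the weights of corresponding fold layers across \emph{all} 2D-norm blocks and \emph{all} stages of the binary tree (which is valid because the fold directions $(\cos(\pi/2^{i-1}),\sin(\pi/2^{i-1}))$ depend only on the within-block layer index), so that only one schedule's worth of $O(1)$-weight layers plus the single Lemma~\ref{lem19} output layer remain distinct. Your side remark that input magnitudes grow to about $R\sqrt{d}$ at the root is unnecessarily pessimistic (any partial norm of $\myvec{x}$ with $\|\myvec{x}\|\le R$ is itself at most $R$, up to the accumulated $\delta_i$ terms already accounted for in Lemma~\ref{lem-approx-vecx}), but this does not affect the correctness of the count or the error bound.
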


Comparing Theorem \ref{thm:main} to Lemma \ref{lem18}, both are of order $d^2$, however the folding network version is more efficient in terms of $R$ and $\delta$. This means the deeper network can be much more efficient when either $d$ or $R$ is large, or $\delta$ is small.

\section{Deep Radial Kernel Network (DRKN)}
\label{sec-svm-approx}

We have shown constructively that a deep network using folding transformations can more efficiently represent finite extent radially symmetric functions than a corresponding 3-layer network. This construction can therefore be used to approximate any system that makes use of radial functions, and could be particularly useful for approximating radially symmetric Gaussians. There are advantages and disadvantages to doing so. The main advantage is that it allows the power and flexibility of deep neural networks to be applied in a systematic way with the architecture specified by the problem at hand. The disadvantage is that it can be more computationally costly to evaluate the radial functions via a deep network compared to directly using the function itself. However, this cost is offset by the extra flexibility afforded by the deep structure. After initialisation, the network can be further trained, allowing it to adapt more towards the data and away from the radial function approximation. In this section we demonstrate this idea on support vector machines with Gaussian kernels and show empirically that such a construction tends to perform better than the corresponding SVM but does not appear to suffer greatly from overtraining.

The first step in creating a DRKN is to train a support vector machine. The method will work for any SVM (or support vector regression) that uses a radially symmetric kernel. The most popular is the Gaussian kernel and that is what we use here.
In this case, for multi-class problems, we use one-vs-rest SVMs. The decision function of the SVM is:
\begin{equation}
f(\myvec{X}) = \argmax_{1\le c \le N_C} \sum_{i=1}^{N_{C,V}} \alpha_{c,i} K(V_{c,i}, X),
\label{eqn-svm-decision}
\end{equation}
where $N_C$ is the number of classes, $N_{C,V}$ is the number of support vectors for class $C$, $\alpha_{c,i}$ is the coefficient for support vector $V_{c,i}$, and $K(\cdot,\cdot)$ is the kernel function. Note that $\alpha_{c,i}$ can be positive or negative as it incorporates the class label of the relevant binary classification problem (1 for the class of interest, and -1 for all other classes).

An SVM with a Gaussian kernel has two parameters: $\sigma$ which specifies the width of the kernel; and $C$, the trade-off between misclassification and decision surface smoothness. These parameters need to be estimated or specified to train an SVM. We make use of the Python scikit learn package \citep{scikit-learn} and a randomised search over an exponential distribution to estimate the optimal parameters for the SVM.

There are several ways to convert Equation \ref{eqn-svm-decision} into a deep network using the techniques of Section \ref{sec-folding}, but for this paper we use the most direct method. The majority of the network relates to approximating the kernel. This kernel is weight shared across all support vectors as in the support vector machine (one alternative is to have a different kernel for each support vector). The kernel used for training the SVM is a Gaussian kernel, however Section \ref{sec-folding} requires a kernel of finite support. For the DRKN we approximate the Gaussian kernel using the polynomial method of \cite{fornefett2001radial}[$Q_{3,1}$] first, then approximate the polynomial using the method of Section \ref{sec-folding}. The network implements the following decision function:

{\small
\begin{equation}
f'(\myvec{X}) = \argmax_{1\le c \le N_C} \left[ \frac{1}{2} + \frac{1}{2} \tanh\left(\sum_{i=1}^{N_{C,V}} \alpha_{c,i} F_n(V_{c,i}-X) \right) \right],
\label{eqn-fold-decision}
\end{equation}}

where $F_n$ is the fold network approximation. We use the cross-entropy softmax loss function and optimise over all weights in the fold network, the support vector centres, and the support vector weights. A conceptual diagram of part of the network for one class is given in Figure \ref{fig-deep-svm}.\footnote{Code for approximating an SVM and training a DRKN can be downloaded from \url{https://bitbucket.org/mccane/deep-radial-kernel-network}.} The cross-entropy objective function is used with stochastic gradient descent. The number of samples in each mini-batch varies depending on the problem, and ranges from 10 to 100.

\begin{figure*}
\centering
\includegraphics[width=0.9\textwidth]{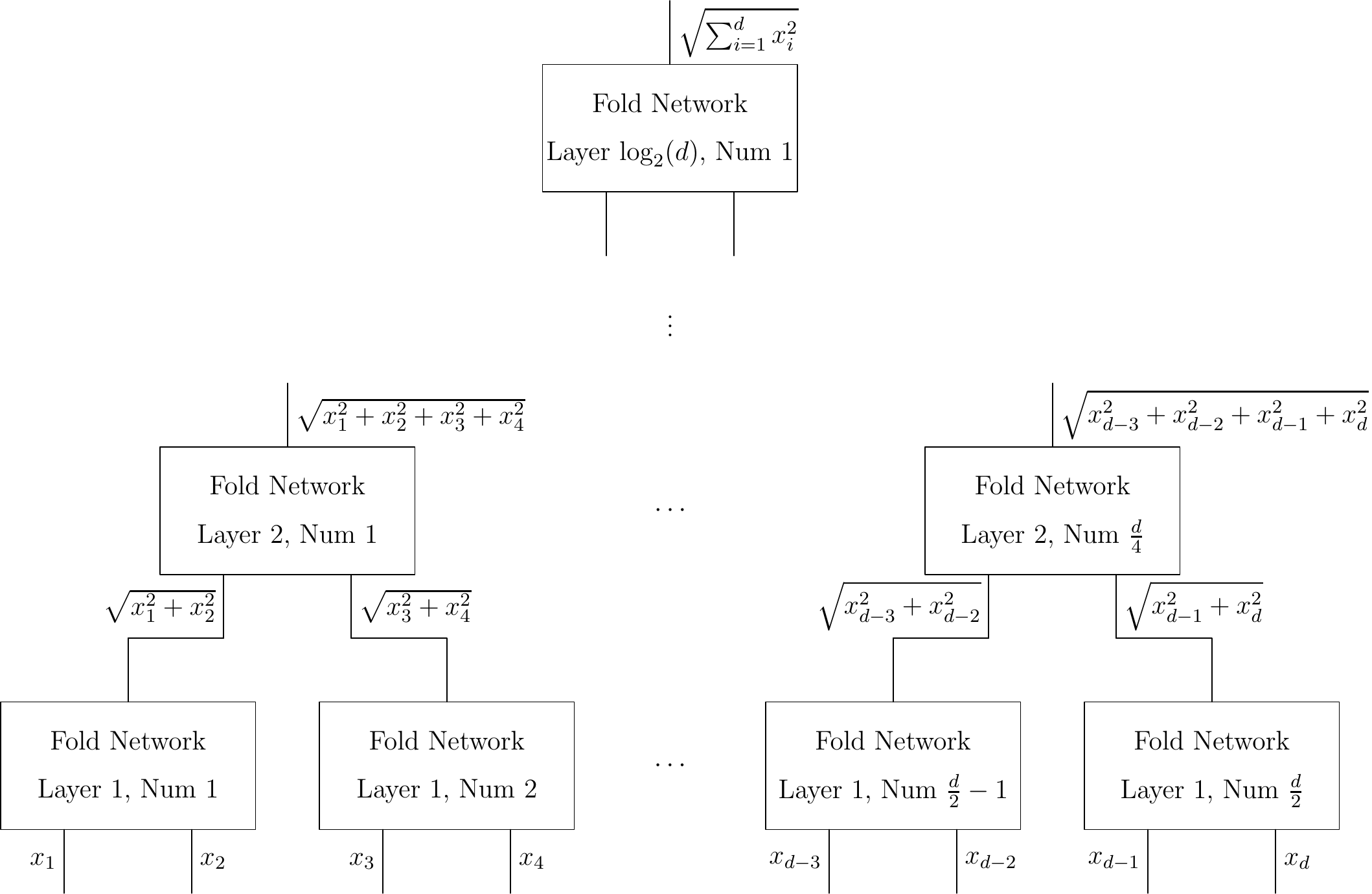}
\caption{A network to approximate the norm of a vector. Each fold network consists of multiple layers. \label{fig-multiple-folds}} 
\end{figure*}
\begin{figure*}
\centering
\includegraphics[width=0.9\textwidth]{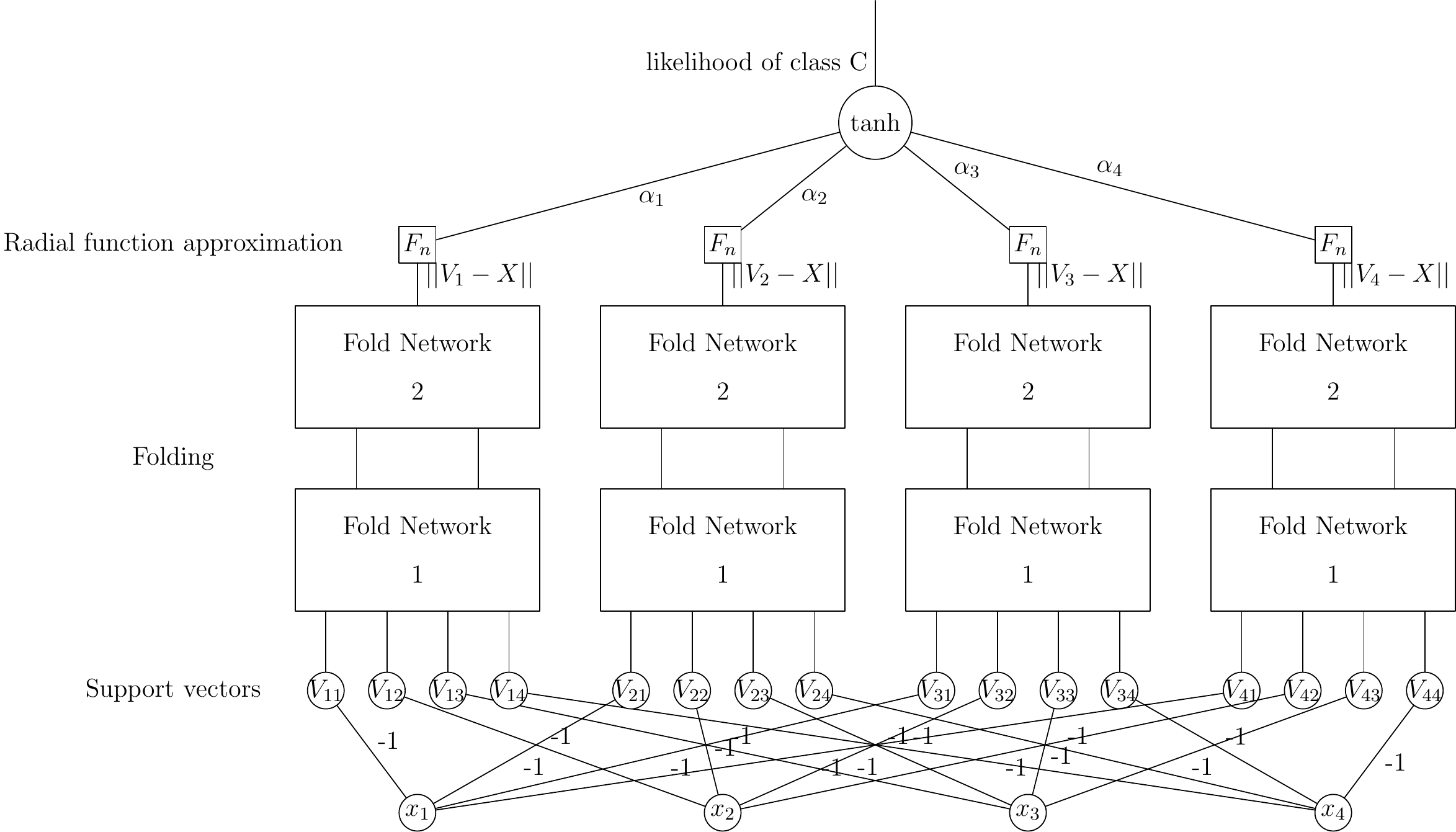}
\caption{Part of a Deep Radial Kernel Network showing the sub-network for a single class for a 4-dimensional problem. There are also (coincidentally) 4 support vectors. Each fold network with the same number has identical weights as does the radial function approximation network $F_n$. Circle nodes indicate single units, while rectangular nodes indicate sub-networks consisting of potentially many neurons. The support vector units effectively behave as single input linear units with a bias (the support vector component). \label{fig-deep-svm}}
\end{figure*}

\subsection{Datasets}

Several standard datasets have been used to test the algorithm and these are listed in Table \ref{tab-datasets}. Most datasets were sourced from the UCI machine learning repository, with the covtype dataset coming via the Python module sklearn. If the dataset was already split into train and test sets, then our testing made use of these sets. If not, then the training set was split 70/30 into train and test sets except for the covtype dataset. In that case, there were too many samples for effective training of an SVM, so 100000 data points were randomly sampled from the set, and these were split 70/30 into train and test sets. In all cases, the SVM and deep network were trained and tested on the same data. 

\begin{table}
\centering
\begin{tabular}{|c|c|c|c|c|}
\hline
Name&Source&Dims&Training \#&Test \#\\
\hline
svmguide1&UCI&4&3090&400\\
whitewine&UCI&11&3428&1470\\
redwine&UCI&8&1112&480\\
breastcancer&UCI&10&490&210\\
sat&UCI&37&4435&2000\\
sensorless drive&UCI&48&46808&11702\\
segmentation&UCI&20&1617&694\\
covtype&sklearn&54&70000&30000\\
\hline
\end{tabular}
\caption{Datasets used to test the algorithm and compare with SVM. \label{tab-datasets}}
\end{table}

\subsection{Results}

Figure \ref{fig-results} shows the results for the 8 example problems with train and test error shown as the number of epochs increases. For reference we include the SVM test error and training and test error for a radial basis function (RBF) network that was initialised with the same kernel and support vectors as the SVM - essentially replacing the fold network and the function approximate in Figure \ref{fig-deep-svm} with a Gaussian RBF neuron. For the RBF network the Gaussian parameter, the support vectors and all the network weights are trainable. 
The RBF network was included to test whether moving the support vectors and/or adjusting the width of the Gaussians were the factors producing improvement.

Note that there is a training anomaly in the covtype results for the DRKN. We think this is due the optimisation algorithm taking a misstep but quickly recovering. In any case, it has little effect on the long term results.

\begin{figure*}
\centering
\includegraphics[width=0.4\textwidth]{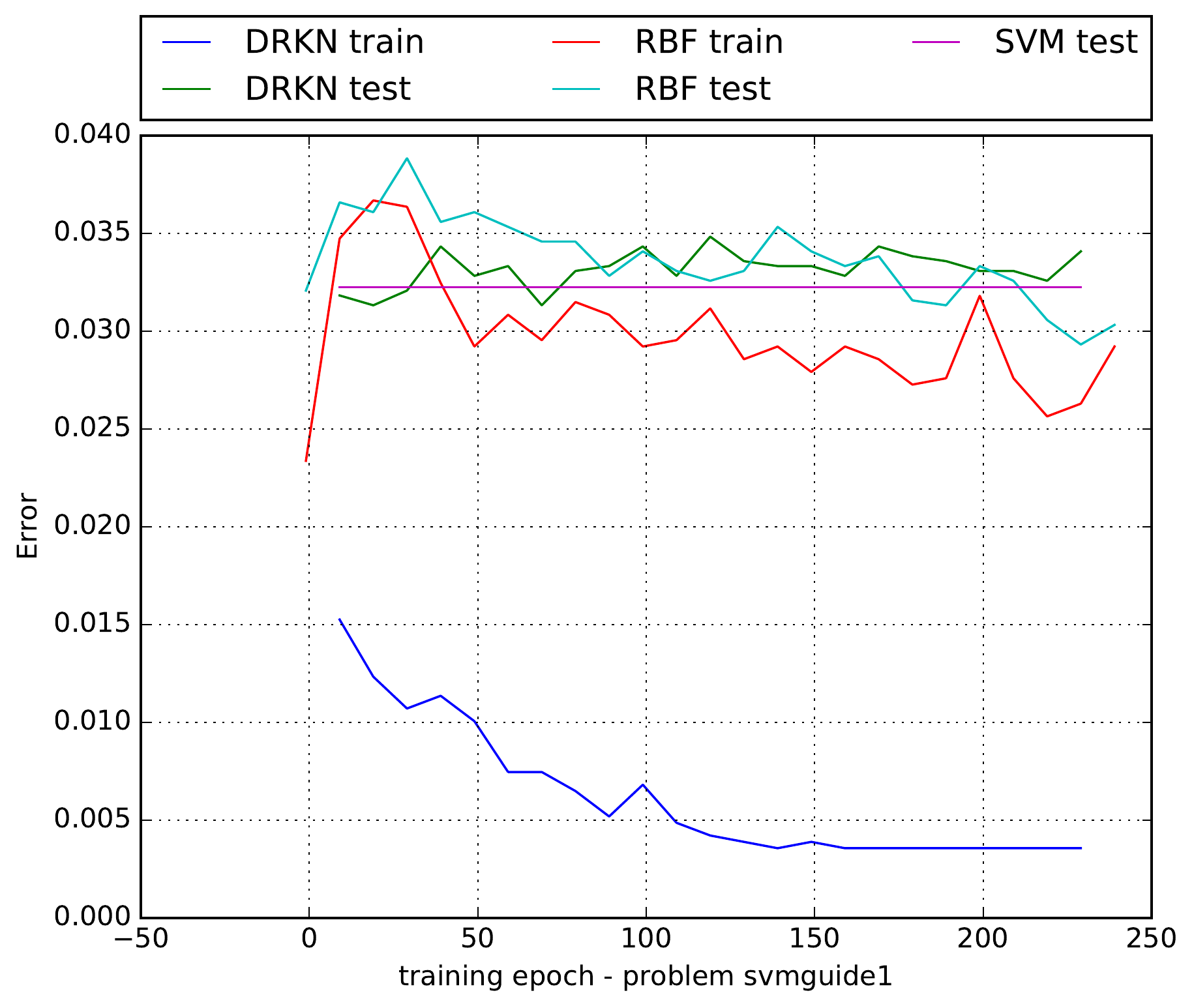}
\includegraphics[width=0.4\textwidth]{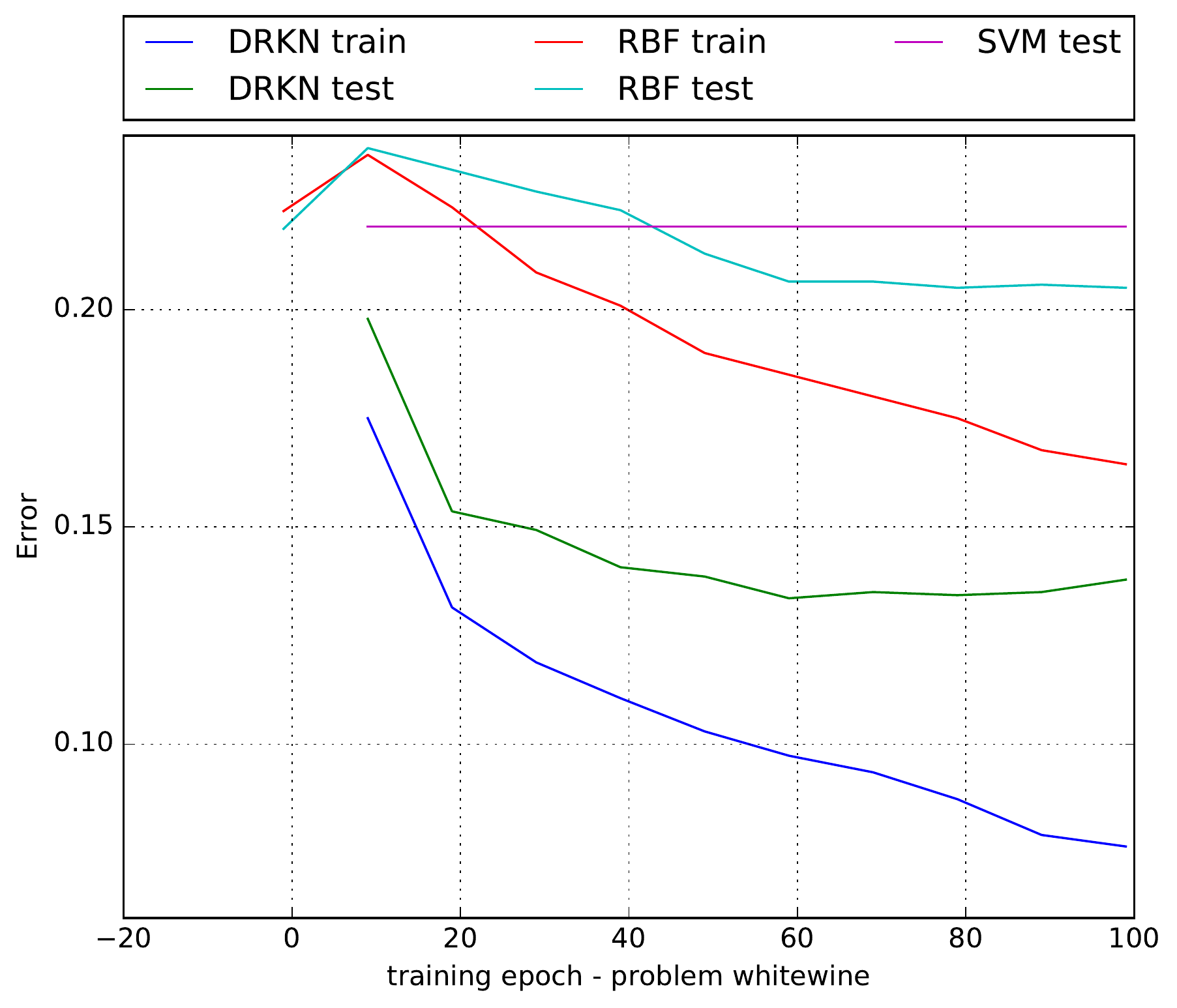}
\includegraphics[width=0.4\textwidth]{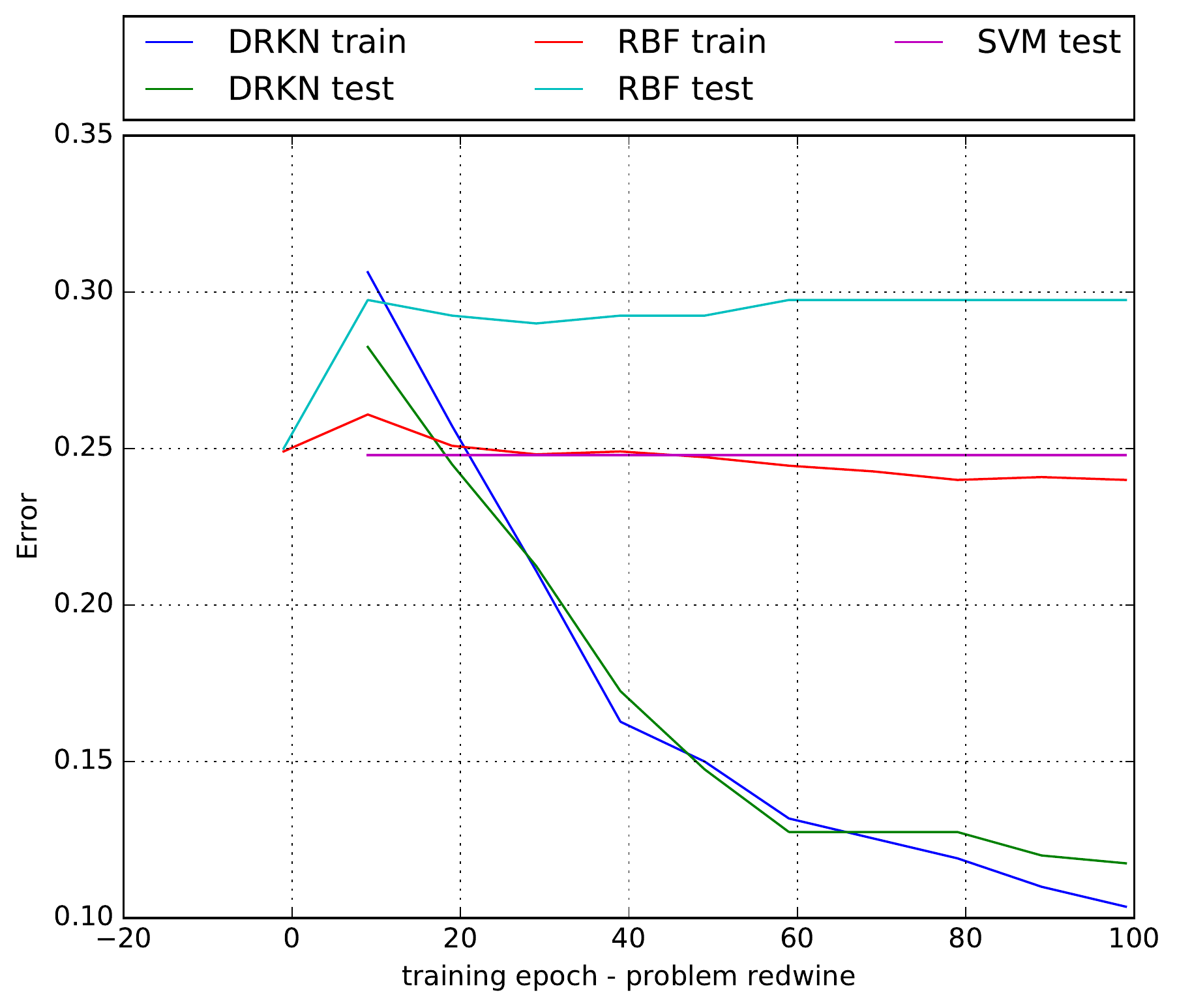}
\includegraphics[width=0.4\textwidth]{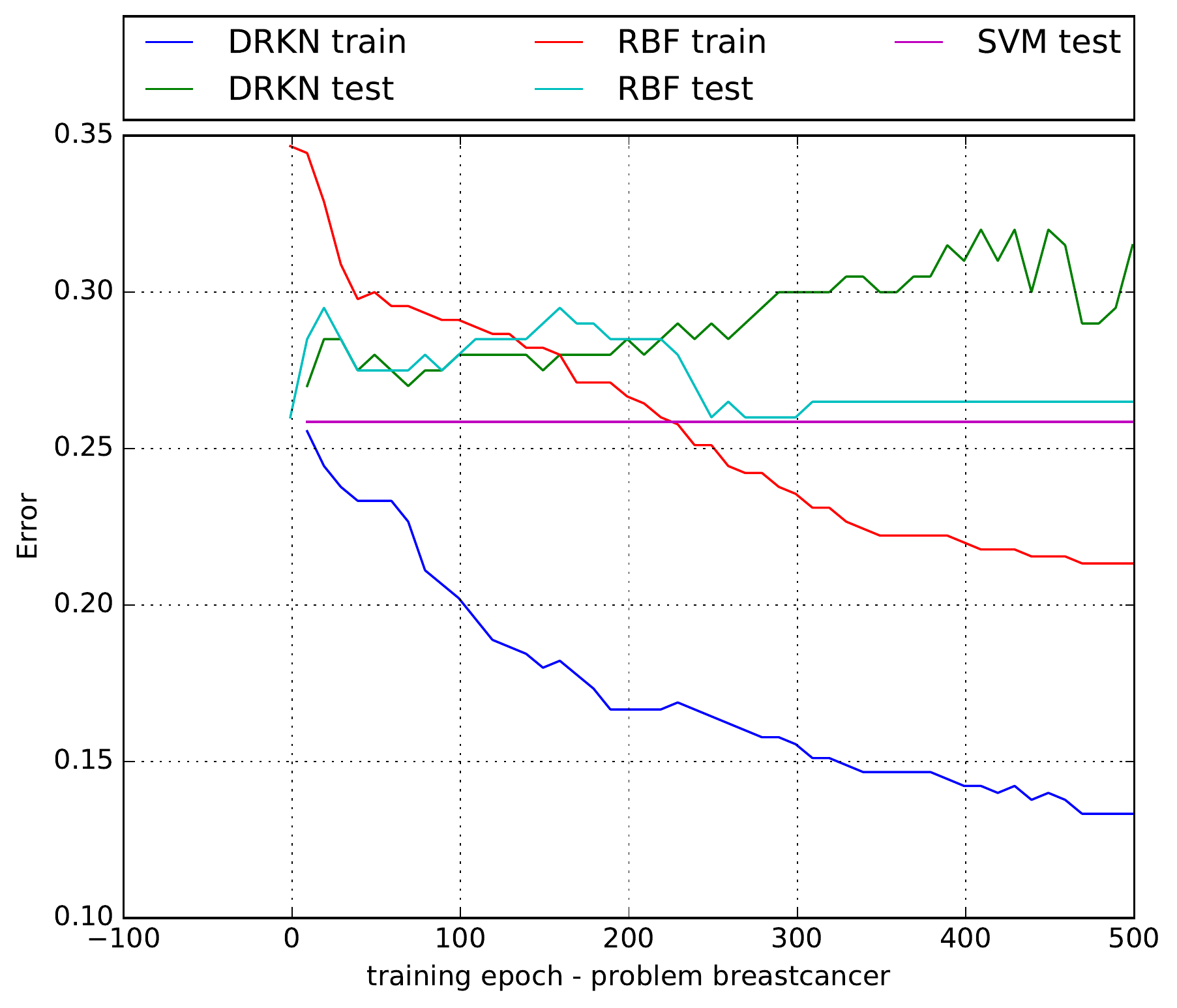}
\includegraphics[width=0.4\textwidth]{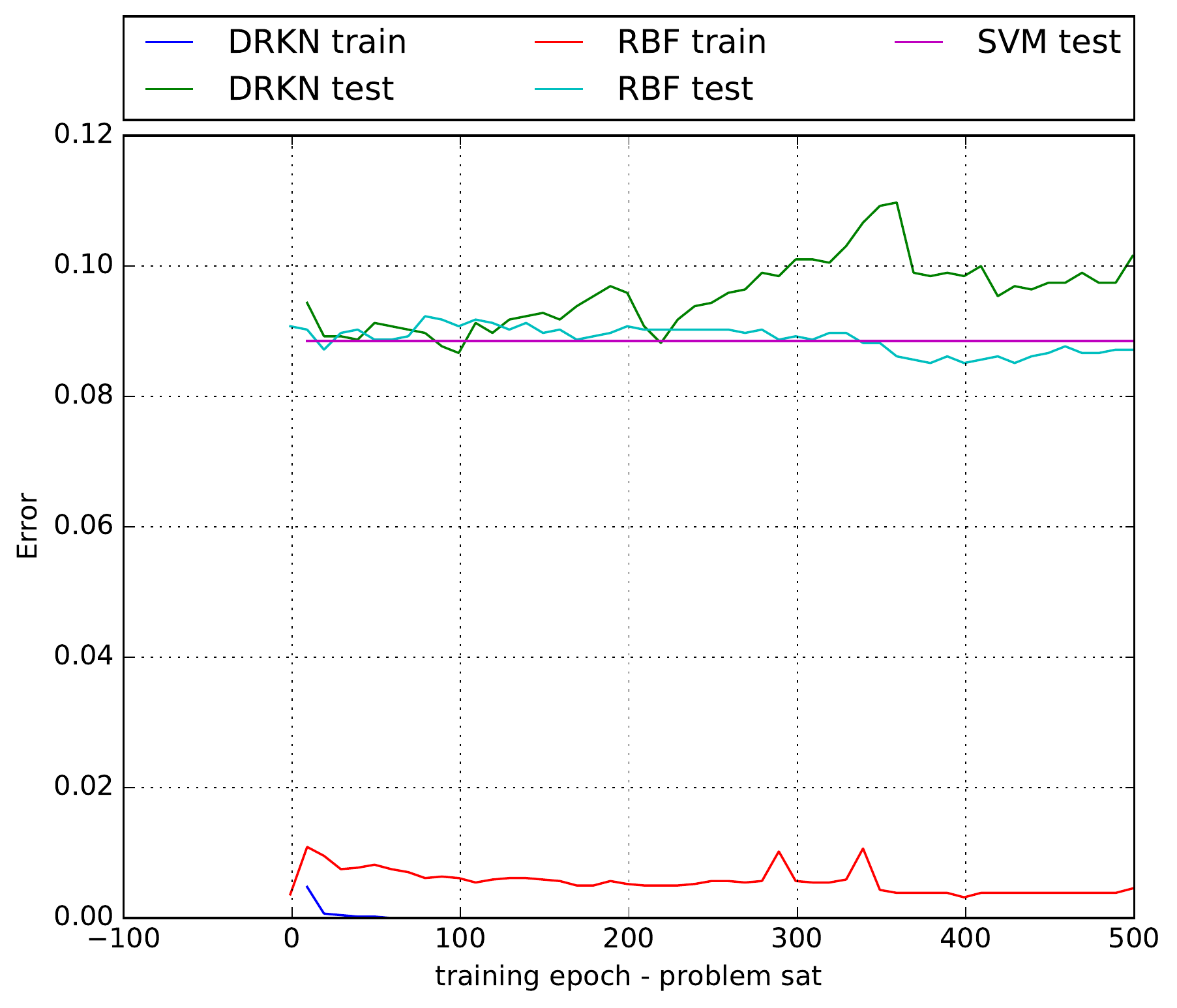}
\includegraphics[width=0.4\textwidth]{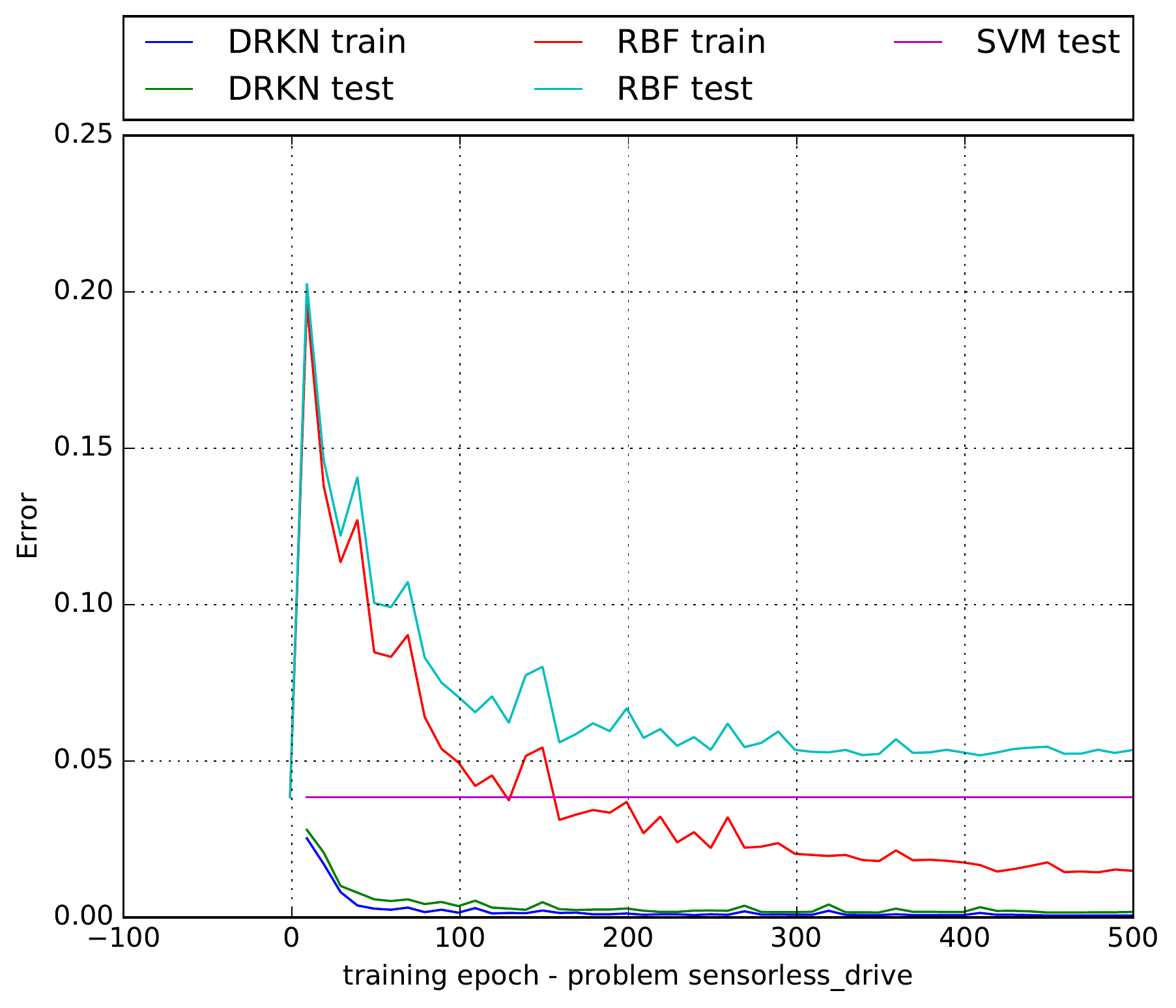}
\includegraphics[width=0.4\textwidth]{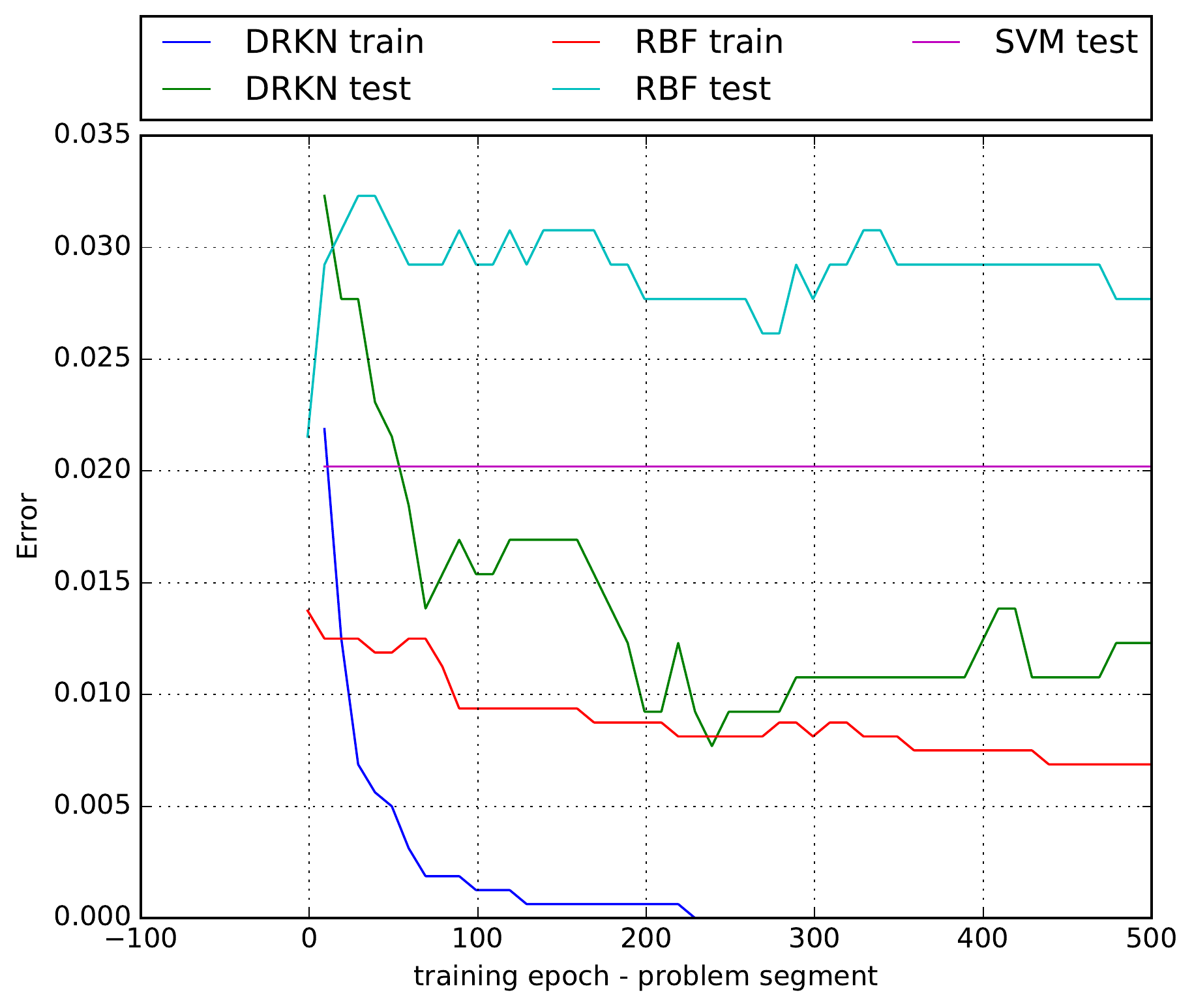}
\includegraphics[width=0.4\textwidth]{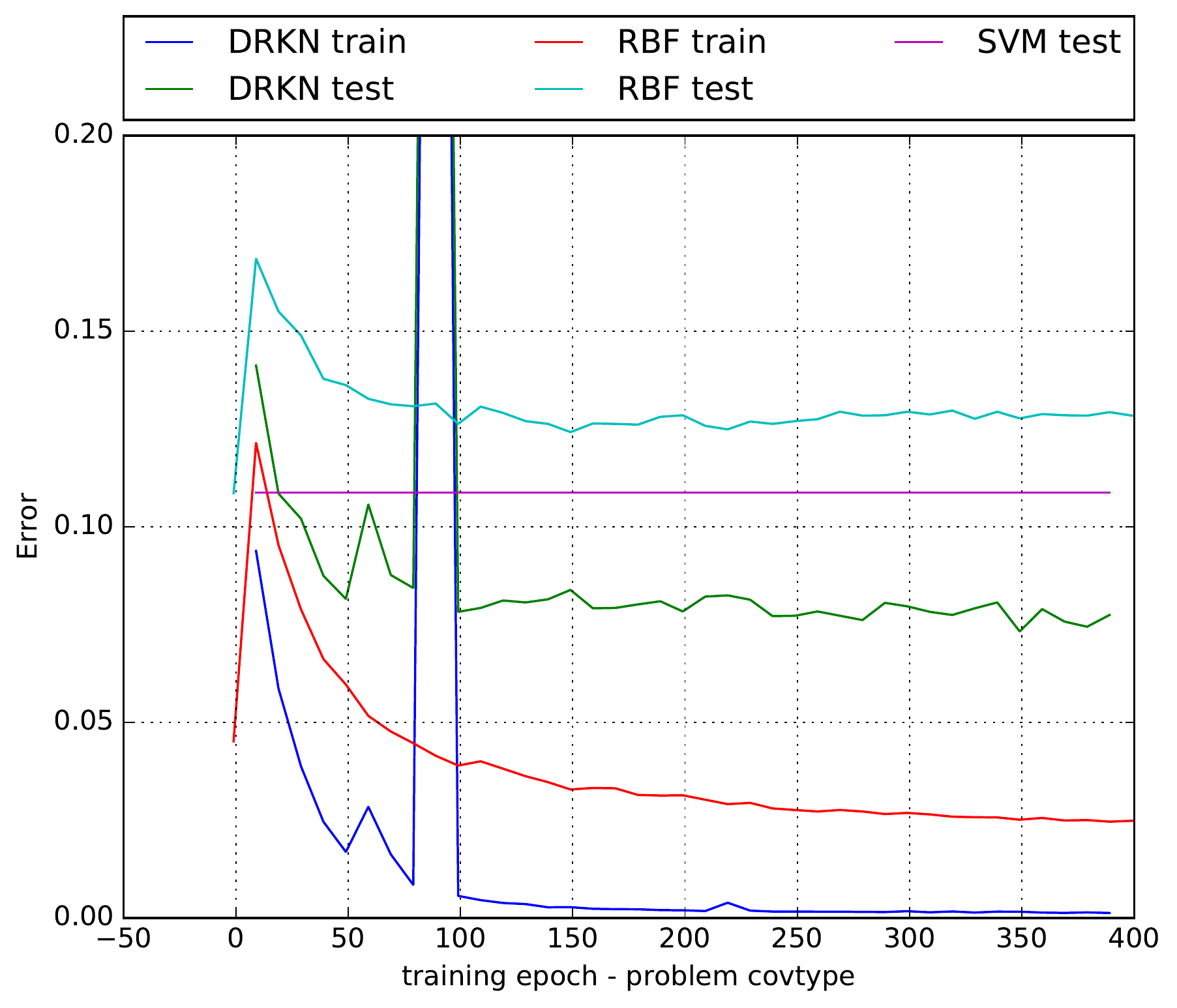}
\caption{Errors and training epoch for each of the example problems. \label{fig-results}} 
\end{figure*}

\section{Discussion}

We have derived a new upper bound for deep networks approximating radially symmetric functions and have shown that deeper networks are more efficient than the 3-layer network of \citet{eldan2016power}. The central concept in this construction is a space fold --- halving the volume of space that each subsequent layer needs to handle. We hypothesise that to take full advantage of deep networks we need to apply operations that work on multiple areas of the input space simultaneously, analogous to taking advantage of the multiple linear regions noted by \citet{montufar2014number}. 
Folding transformations are one way to ensure that any operation applied in a later layer is simultaneously applied to many regions in the input layer. We believe there are many other possible transformations, but the reflections used here might be considered fundamental in a sense due to the Cartan–-Dieudonn\'{e}–-Scherk Theorem which states that all orthogonal transformations can be decomposed into a sequence of reflections. We are yet to fully investigate the consequences of this theorem. 

Radial basis function (RBF) networks are universal approximators and therefore we can use deep radial approximations to approximate any function by decomposing the function into a sum of RBFs first (such as mixture of Gaussians). However, the efficiency of the method will decrease as the number of RBFs required increases. It remains an open question at which point it becomes more efficient to directly approximate the function with a shallow network.

We have used our theoretical construction to
build a deep radially symmetric function approximator and employed it to approximate Gaussian kernel SVMs. Of the 8 problems tested, the method performs much better on 3, moderately better on two, and similarly on 3. In contrast, an actual RBF network approximation has performed no better than the initial SVM.
It should be pointed out that there is no real loss, other than computation, to trying the DRKN on any particular problem. It requires no parameter tuning other than the choice of opimisation method and learning parameters, and if it performs better it can be adopted, but if it does not, one can always fall back to the original SVM.
The flexibility of the DRKN comes at a cost however, as the size of the network can be quite large, and hence training can be slow. However, the method is equally applicable to more scalable approximate SVM algorithms such as the Core Vector Machine \citep{tsang2005core} or SimpleSVM \citep{vishwanathan2002ssvm}, and future work will involve testing the method on these approximate algorithms.

\bibliography{papers}
\bibliographystyle{plainnat}

\newpage
\begin{appendices}
\appendixpage
\section{Some Preliminaries}

First a definition of $L$-Lipschitz.
A function is $L$-Lipschitz if:
\begin{equation}
|f(x) - f(y)| \le L |x-y|
\end{equation}

We also need to make use of the following Lemma from \citet{eldan2016power} which we state without proof:
\lemnineteen*

\section{3 Layer Network}
We also need the following lemma which is modified from Lemma 18 of \citep{eldan2016power}. Since it is a modified version, we give a proof.
\lemep*
\begin{proof}
The proof consists of constructing a 3-layer network with the first layer being the input. The second layer approximates $x_i^2$, then the third layer computes $\sum_i x_i^2$ and approximates $f$.

\vspace{1em}
\noindent \emph{Approximate $x_i^2$:}

\noindent Define the $2R$-Lipschitz function:
\begin{equation}
l(x) = \min\{x^2, R^2\},
\end{equation}

\noindent Using Lemma \ref{lem19}, create the function $\bar{l}(x)$ having the form $a + \sum_{i=1}^{w_1} \alpha_i \sigma(x-\beta_i)$ so that
\begin{equation}
\sup_{x \in \myreal} \left| \bar{l}(x)-l(x) \right| \le \frac{\delta}{d}
\end{equation}
where $w_1\le \frac{6 d R^2}{\delta}$.

\vspace{1em}
\noindent \emph{Appproximate $\sum_i x_i^2$:}

\noindent Define the function $\ell: \myreal^d \to \myreal$
\begin{equation}
\ell(\myvec{x}) = \sum_{i=1}^d l(x_i) = \sum_{i=1}^d \min\{x_i^2, R^2\}
\end{equation}
$\ell(\myvec{x})$ is $2dR$-Lipschitz.
Consequently, define the function
\begin{equation}
\bar{\ell}(\myvec{x}) = \sum_{i=1}^d \bar{l}(x_i) = \sum_{i=1}^d \left[ a_i + \sum_{j=1}^{w_1} \alpha_{ij} \sigma(x_i-\beta_{ij}) \right]
\end{equation}

\noindent Note that $\bar{\ell}(\myvec{x})$ is $2d R$-Lipschitz.
At this point we have $w_{12} = d w_1 \le \frac{6d^2 R^2}{\delta}$ weights in the first two layers of the network and:
\begin{equation}
\sup_{\myvec{x} \in \myreal^d} \left|\bar{\ell}(\myvec{x})-\ell(\myvec{x}) \right| \le \delta.
\end{equation}

\noindent \emph{Approximate $f(||\myvec{x}||)$}:

\noindent The input to the final layer is $\bar{\ell}(\myvec{x})$ which is an approximation of $\ell(\myvec{x})=||\myvec{x}||^2$. 
The error associated with approximating $f(\sqrt{\ell(\myvec{x})})$ is:
\begin{align}
|f(\sqrt{\bar{\ell}(\myvec{x})}) - f(\sqrt{\ell(\myvec{x})})| &\le L | \sqrt{\ell(\myvec{x})\pm\delta} - \sqrt{\ell({\myvec{x}})}|\\
&\le L | \sqrt{\ell(\myvec{x})} \pm \sqrt{\delta} - \sqrt{\ell({\myvec{x}})}|\\
&\le L | \sqrt{\delta} |
\end{align}
Since $f$ is L-Lipschitz, $\ell(\myvec{x})-\delta\le\bar{\ell}(\myvec{x})\le\ell(\myvec{x})+\delta$ and $\sqrt{\ell(\myvec{x})\pm\delta}\le \sqrt{\ell(\myvec{x})} \pm \sqrt{\delta}$.

\noindent Now we are able to approximate $f(\sqrt{\bar{\ell}(\myvec{x})})$ using Lemma \ref{lem19} with a function of the form:
\begin{equation}
g(\myvec{x}) = a + \sum_{k=1}^{w_3} \alpha_k \sigma \left(\sum_{i=1}^d \left[ a_i + \sum_{j=1}^{w_1} \alpha_{ij} \sigma(x_i-\beta_{ij}) \right] - \beta_k \right)
\end{equation}

From Lemma \ref{lem19}:
\begin{align}
\sup_{\myvec{x} \in \myreal^d}|g(\myvec{x}) - f(\sqrt{\bar{\ell}(\myvec{x})})| &\le \delta
\end{align}
with $w_3 \le \frac{3RL}{\delta}$. So either (taking the worst case error):
\begin{align}
\sup_{\myvec{x} \in \myreal^d}|g(\myvec{x}) - f(\sqrt{\ell(\myvec{x})})| & =\sup_{\myvec{x} \in \myreal^d}|g(\myvec{x}) - f(\sqrt{\bar{\ell}(\myvec{x})}) - L\sqrt{\delta}| \\
& = \sup_{\myvec{x} \in \myreal^d} ||g(\myvec{x}) - f(\sqrt{\bar{\ell}(\myvec{x})})| - L\sqrt{\delta}| \\
& \le |\delta - L\sqrt{\delta}|
\end{align}
or
\begin{align}
\sup_{\myvec{x} \in \myreal^d}\left| g(\myvec{x}) - f(\sqrt{\ell(\myvec{x})}) \right| & =\sup_{\myvec{x} \in \myreal^d} \left| g(\myvec{x}) - f(\sqrt{\bar{\ell}(\myvec{x})}) + L\sqrt{\delta} \right| \\
&= \sup_{\myvec{x} \in \myreal^d} \left| g(\myvec{x}) - f(\sqrt{\bar{\ell}(\myvec{x})}) \right| + L\sqrt{\delta}\\
&\le \delta + L\sqrt{\delta}
\end{align}
and therefore:
\begin{equation}
\sup_{\myvec{x} \in \myreal^d} \left| g(\myvec{x}) - f(\sqrt{\ell(\myvec{x})}) \right| \le \delta + L\sqrt{\delta}
\end{equation}
and the number of weights is at most: $\frac{6 d^2 R^2 + 3 R L}{\delta}$.
\end{proof}

\section{Folding Transformations}
\label{sec-app-folding}

In this section we show how folding transformations \citep{szymanski2014deep} can be used to create a much deeper network with the same error, but many fewer weights than needed in Lemma \ref{lem-3layer}. A folding transformation is one in which half of a space is reflected about a hyperplane, and the other half remains unchanged. Figure \ref{fig-fold-2d} demonstrates how a sequence of folding transformations can transform a circle in 2D to a small sector.
We will use this general idea to prove the following theorem:
\main*

The approach taken here is a constructive one and specifies the architecture of the network needed to approximate a function. In fact, all of the weights except those in the last layer are specified. The approach is somewhat different to that used to prove Lemma \ref{lem-3layer}. We build a sequence of layers to directly approximate $||\myvec{x}||$ and then use Lemma \ref{lem19} to approximate $f$. To build our layers, we need a few helper lemmas.

\lemtwodfold*
\begin{proof}
The necessary ReLU network is shown in Figure \ref{fig-relu-fold-2d}. Only one of the nodes labeled $x_{-}$ ($y_{-}$) and $x_{+}$ ($y_{+}$) are active at any one time. Therefore there are four possible cases depending on which two nodes are active. Note that $x_{-}$ is active when $\myvec{l} \cdot \myvec{x^\perp} < 0$ and $x_{+}$ is active when $\myvec{l} \cdot \myvec{x^\perp} > 0$.

\noindent \emph{Case 1}: $x_{-}$ and $y_{-}$
\noindent 
\begin{align*}
x' &= l_y(-l_y x + l_x y) - l_x(-l_x x - l_y y)\\
y' &= -l_x(-l_y x + l_x y) - l_y (-l_x x - l_y y)\\
\begin{bmatrix}
x' \\ y'
\end{bmatrix}
&=
\begin{bmatrix}
l_x^2 -l_y^2 & 2 l_x l_y\\
2 l_x l_y & l_y^2 - l_x^2
\end{bmatrix}
\begin{bmatrix}
x \\ y
\end{bmatrix}
\end{align*}

\noindent \emph{Case 2}: $x_{+}$ and $y_{-}$
\noindent
\begin{align*}
x' &= l_y (l_y x - l_x y) -l_x(-l_x x -l_y y) \\
y' &= -l_x(l_y x - l_x y) -l_y(-l_x x - l_y y)\\
\begin{bmatrix}
x' \\ y'
\end{bmatrix}
&=
\begin{bmatrix}
x \\ y
\end{bmatrix}
\end{align*}

\noindent \emph{Case 3}: $x_{-}$ and $y_{+}$
\noindent
\begin{align*}
x' &= l_y(-l_y x + l_x y) + l_x (l_x x + l_y y)\\
y' &= -l_x(-l_y x + l_x y) + l_y (l_x x + l_y y)\\
\begin{bmatrix}
x' \\ y'
\end{bmatrix}
&=
\begin{bmatrix}
l_x^2 -l_y^2 & 2 l_x l_y\\
2 l_x l_y & l_y^2 - l_x^2
\end{bmatrix}
\begin{bmatrix}
x \\ y
\end{bmatrix}
\end{align*}

\noindent \emph{Case 4}: $x_{+}$ and $y_{+}$
\noindent
\begin{align*}
x' &= l_y (l_y x - l_x y) +l_x(l_x x +l_y y) \\
y' &= -l_x(l_y x - l_x y) +l_y(l_x x + l_y y)\\
\begin{bmatrix}
x' \\ y'
\end{bmatrix}
&=
\begin{bmatrix}
x \\ y
\end{bmatrix}
\end{align*}

\end{proof}

\lemapproxtwod*
\begin{proof}

We simply stack layers of the type shown in Figure \ref{fig-relu-fold-2d} with suitable choice of $l_x, l_y$ at each layer. Note that the summation nodes aren't required since they can be incorporated into the summations and weights of the next ReLU layer. 
Call the first ReLU layer, layer 1, and set $l_{x,i}=\cos(\frac{\pi}{2^{i-1}})$, $l_{y,i}=\sin(\frac{\pi}{2^{i-1}})$. Layer 1 will fold all points to the positive $y$-axis half-plane. Layer 2 will fold all points to the positive $(x, y)$-quadrant. Layer 3 to within $\frac{\pi}{4}$ of the $x$-axis etc. After $f$ such layers, $||\myvec{x}||$ can be approximated with the resulting $x$-coordinate, with the following error:
\begin{align}
\delta_1 &=||\myvec{x}||-\hat{x}\\
&= ||\myvec{x}||-||\myvec{x}||\cos(\frac{\pi}{2^{f}})\\
&\le R(1-\cos(\frac{\pi}{2^{f}}))\\
&\le R(2 \sin^2(\frac{\pi}{2^{f+1}}))\\
&\le R(2 \sin(\frac{\pi}{2^{f+1}}))\\
&\le R(\frac{\pi}{2^{f}})\\
2^f &\le R \frac{\pi}{\delta_1}\\
f &\le \log_2(R \frac{\pi}{\delta_1})
\end{align}

\end{proof}

\lemapproxvecx*
\begin{proof}
We note that a fold in 2D plane in $\myreal^d$ will leave all coordinates perpendicular to the plane unchanged. We can therefore apply the approximation of Lemma \ref{lem-approx-2d} to pairs of input coordinates to produce $d/2$ new coordinates. Then apply the same reduction to produce $d/4$ coordinates and continue on this way until there is only one coordinate left. In effect, we are calculating the norm via the following scheme:
\begin{equation}
\sqrt{x_1^2 + x_2^2 + x_3^2 + \cdots + x_d^2} =
\sqrt{ \cdots \sqrt{ \sqrt{x_1^2 + x_2^2}^2 + \sqrt{x_3^2+x_4^2}^2}^2 \cdots \sqrt{\sqrt{\cdots}^2 + \sqrt{x_{n-1}^2 + x_{n-2}^2}^2}^2}
\end{equation}

Let $g_i$ represent the function mapping of fold sequence $i$.
That is, $g_{1,x_{1:2}}$ denotes the first fold sequence that folds the first two coordinates $x_1, x_2$; $g_{1,x_{3:4}}$ folds the second two coordinates etc. And $g_{2,x_{1:4}}$ denotes the second fold sequence that folds the output of the first two folds in layer one. See Figure \ref{fig-multiple-folds} for a schematic of the network. More formally, we have (with some abuse of notation):
\begin{align*}
g_{1,j:j+1} &= g_1(x_j,x_{j+1}) \\
g_{i>1,j:j+2^i-1} &= g_i(g_{i-1,x_{j:j+2^{i-1}-1}},g_{i=1,x_{j+2^{i-1}:j+2^i-1}})
\end{align*}
Each fold layer requires $\log_2(R \frac{\pi}{\delta_1})$ network layers of 4 neurons each and results in an error no greater than $\delta_1$. We have the following situation after the first fold layer:
\begin{align*}
\sqrt{x_1^2+x_2^2}-\delta_1 \le g_{1,x_{1:2}} &\le \sqrt{x_1^2+x_2^2}+\delta_1\\
\sqrt{x_3^2+x_4^2}-\delta_1 \le g_{1,x_{3:4}} &\le \sqrt{x_3^2+x_4^2}+\delta_1\\
&\cdots
\end{align*}
We proceed via induction and bound the error produced at each subsequent layer:
\begin{align}
&\sqrt{\sum_{j=n}^{n+2^{i}-1} x_j^2} - \left[\lemfive{(i+1)}{i} \right] \delta_1 \nonumber\\ 
&\le g_{i,x_{n:n+2^{i}-1}}
\nonumber \\
&\le \sqrt{\sum_{j=n}^{n+2^{i}-1} x_j^2} + \left[ \lemfive{(i+1)}{i} \right] \delta_1,
\label{eqn-error-induct}
\end{align}
where $n$ is the appropriate coordinate index, $2\le i \le d$, $\delta_1$ is the error from a single fold. Appropriate coordinate index in this context means $n \in \{1,5,9,\cdots\}$ for $i=2$, $n \in \{1,9,17,\cdots\}$ for $i=3$, etc.

At the second fold layer (the base case, $i=2$), we would like to bound the error on the result in terms of the target computation ($\sqrt{x_1^2+x_2^2+x_3^2+x_4^2})$. We use the first 4 coordinates without loss of generality. The second fold layer gives us:
\begin{align*}
\sqrt{g_{1,x_{1:2}}^2 + g_{1,x_{3:4}}^2} - \delta_1 &\le g_{2,x_{1:4}}
\le \sqrt{g_{1,x_{1:2}}^2 + g_{1,x_{3:4}}^2} + \delta_1
\end{align*}
First, consider the right hand side:
\begin{align*}
&g_{2,x_{1:4}} \\
&\le \sqrt{g_{1,x_{1:2}}^2 + g_{1,x_{3:4}}^2} + \delta_1\\
&\le \sqrt{(\sqrt{x_1^2+x_2^2}+\delta_1)^2 + (\sqrt{x_3^2+x_4^2}+\delta_1)^2 } + \delta_1\\
&\le \sqrt{x_1^2+x_2^2 + x_3^2 + x_4^2 +2 \delta_1(\sqrt{x_1^2+x_2^2} + \sqrt{x_3^2+x_4^2}) + 2\delta_1^2 } + \delta_1\\
&\le \sqrt{x_1^2+x_2^2 + x_3^2 + x_4^2 + 2\sqrt{2} \delta_1 (\sqrt{x_1^2+x_2^2+x_3^2+x_4^2}) + 2 \delta_1^2 } + \delta_1\\
&\le \sqrt{(\sqrt{x_1^2+x_2^2+x_3^2+x_4^2} + \sqrt{2}\delta_1)^2} + \delta_1\\
&\le \sqrt{x_1^2+x_2^2+x_3^2+x_4^2} + (1+\sqrt{2}) \delta_1
\end{align*}


The left hand side:
\begin{align*}
&g_{2,x_{1:4}}\\
&\ge \sqrt{g_{1,x_{1:2}}^2 + g_{1,x_{3:4}}^2} - \delta_1\\
&\ge \sqrt{(\sqrt{x_1^2+x_2^2}-\delta_1)^2 + (\sqrt{x_3^2+x_4^2}-\delta_1)^2 } - \delta_1\\
&\ge \sqrt{x_1^2+x_2^2 + x_3^2 + x_4^2 - 2\delta_1(\sqrt{x_1^2+x_2^2}+\sqrt{x_3^2+x_4^2}) + 2\delta_1^2 } - \delta_1\\
&\ge \sqrt{x_1^2+x_2^2 + x_3^2 + x_4^2 - 2\sqrt{2}\delta_1(\sqrt{x_1^2+x_2^2+x_3^2+x_4^2}) + 2\delta_1^2} - \delta_1 \\
&\ge \sqrt{ (\sqrt{x_1^2+x_2^2+x_3^2+x_4^2} - \sqrt{2}\delta_1)^2 } - \delta_1\\
&\ge \sqrt{x_1^2+x_2^2+x_3^2+x_4^2} - (1+\sqrt{2})\delta_1
\end{align*}
These clearly satisfy Equation \ref{eqn-error-induct}.

For the induction step we want to show that Equation \ref{eqn-error-induct} is true for $i+1$, given it is true for $i$. We start by applying Lemma \ref{lem-2d-fold}:
\begin{align}
&\sqrt{g_{i,x_{n:n+2^i-1}}^2+g_{i,x_{n+2^i:n+2^{i+1}-1}}^2}-\delta_1 \nonumber\\
&\le g_{i+1,x_{n:n+2^{i+1}-1}} \nonumber\\
&\le \sqrt{g_{i,x_{n:n+2^i-1}}^2+g_{i,x_{n+2^i:n+2^{i+1-1}}}^2}+\delta_1
\label{eq-ind-s1}
\end{align} 

\noindent Start with the right hand side and for brevity just use $g_{i+1}$, and let $\delta_i =  \left[ \lemfive{(i+1)}{i} \right] \delta_1$, $a=\sum_{j=n}^{n+2^i-1} x_j^2$, $b=\sum_{j=n+2^i}^{n+2^{i+1}-1} x_j^2$:
\begin{align*}
g_{i+1}
&\le \sqrt{\left(\sqrt{a}+\delta_i\right)^2 + \left(\sqrt{b}+\delta_i\right)^2} + \delta_1\\
&\le \sqrt{a + b + 2 \delta_i (a+b) + 2\delta_i^2} + \delta_1 \\
&\le \sqrt{ (\sqrt{a+b} + \sqrt{2}\delta_i)^2} + \delta_1 \\
&\le \sqrt{ a + b } + \sqrt{2}\delta_i + \delta_1\\
&\le \sqrt{a + b} + \sqrt{2} \left[ \lemfive{(i+1)}{i} \right] \delta_1 + \delta_1\\
&\le \sqrt{a + b} + \left[ \sqrt{2}(\lemfive{(i+1)}{i}) + 1 \right] \delta_1 \\
&\le \sqrt{a + b} + \left[ \lemfive{(i+2)}{(i+1)} \right] \delta_1 \\
\end{align*}
As required. A similar argument can be constructed for the left hand side.

\noindent Each folding sequence requires $\log_2(R \frac{\pi}{\delta_1})$ layers and results in an error no greater than $\delta_1$. We need $\log_2{d}$ such folding sequences giving the number of layers as:
\begin{equation}
N_l = \log_2{d} \log_2(R \frac{\pi}{\delta_1})
\end{equation}
The first fold sequence requires $2d$ neurons per layer and $4d$ weights per layer. The second fold sequence requires $d$ neurons per layer and $2d$ weights per layer and so on. The total number of neurons then is:
\begin{align*}
N_n&= \sum_{i=1}^{\log_2{d}} \frac{2d}{2^{i-1}} \log_2{(R \frac{\pi}{\delta_1})}\\
&= 4(d-1) \log_2{(R \frac{\pi}{\delta_1})}
\end{align*}
The total number of weights, then is:
\begin{equation*}
N_w = 8(d-1)\log_2{(R \frac{\pi}{\delta_1})}
\end{equation*}

\noindent In terms of the overall error of the approximation ($\delta$), we have:
\begin{align*}
N_l&=\log_2{d} \log_2{\left(R \left[\lemfive{(d+1)}{d} \right]\frac{\pi}{\delta}\right)}\\
&=O\left(d \log_2{d} + d \log_2{\left( R \frac{\pi}{\delta} \right)} \right)\\
N_n&=4(d-1) \log_2{\left(R \left[\lemfive{(d+1)}{d} \right]\frac{\pi}{\delta}\right)}\\
&= O\left(d^2 + d \log_2{\left(R \frac{\pi}{\delta} \right)}\right)\\
N_w&= O\left(d^2  + d \log_2{\left(R \frac{\pi}{\delta} \right)}\right)
\end{align*}
\end{proof}

Now we are ready to prove the main theorem.

\main*
\begin{proof}
From Lemma \ref{lem-approx-vecx} we can approximate $||\myvec{x}||$ to within $\sqrt{\delta}$. Therefore:
\begin{align*}
f(||\myvec{x}||+\sqrt{\delta})-\delta &\le g(||\myvec{x}||+\sqrt{\delta}) \le f(||\myvec{x}||+\sqrt{\delta})+\delta, \text{  from Lemma \ref{lem19}}\\
f(||\myvec{x}||)+L\sqrt{\delta}-\delta &\le g(||\myvec{x}||+\sqrt{\delta}) \le f(||\myvec{x}||)+L\sqrt{\delta}+\delta
\end{align*}
and 
\begin{align*}
f(||\myvec{x}||-\sqrt{\delta})-\delta &\le g(||\myvec{x}||-\sqrt{\delta}) \le f(||\myvec{x}||-\sqrt{\delta})+\delta\\
f(||\myvec{x}||)-L\sqrt{\delta}-\delta &\le g(||\myvec{x}||+\sqrt{\delta}) \le f(||\myvec{x}||)-L\sqrt{\delta}+\delta
\end{align*}
therefore:
\begin{equation*}
f(||\myvec{x}||)-L\sqrt{\delta}-\delta \le g(||\myvec{x}||+\sqrt{\delta}) \le f(||\myvec{x}||)+L\sqrt{\delta}+\delta
\end{equation*}

\noindent The number of weights and neurons required by Lemma \ref{lem19} is $3 \frac{RL}{\delta}$. The number of weights and neurons required to estimate $||\myvec{x}||$ is given by Lemma \ref{lem-approx-vecx} (substituting $\sqrt{\delta}$ for $\delta$). Stack the network from Lemma \ref{lem19} onto the end of the network from Lemma \ref{lem-approx-vecx}, thus requiring a total number of neurons no more than:
\begin{align*}
N_n &\le \left[ 4(d-1) \log_2{\left(\frac{R\pi}{\sqrt{\delta}} \left[ \lemfive{(d-1)}{d} \right]\right)}\right] + \frac{3RL}{\delta} \\
N_n &= O(d^2 + d\log_2(\frac{R}{\sqrt{\delta}}) + \frac{3 RL}{\delta})
\end{align*}
and a total number of weights no more than:
\begin{align*}
N_w &\le \left[ 8(d-1) 
\log_2{\left(\frac{R\pi}{\sqrt{\delta}} \left[ \lemfive{(d-1)}{d} \right]\right)}
\right] + \frac{3RL}{\delta}\\
N_w &= O(d^2 + d \log_2(\frac{R}{\sqrt{\delta}}) + \frac{3 RL}{\delta})
\end{align*}
\end{proof}

\end{appendices}

\end{document}